\newtheorem{theorem}{Theorem}
\newtheorem{proposition}{Proposition}
\newtheorem{corollary}{Corollary}
\newtheorem{lemma}{Lemma}
\newtheorem{remark}{Remark}
\newcommand{\trace}{\textrm{Tr}}
\begin{document}
	
	\title{A Semi-Definite Programming approach to low dimensional embedding for unsupervised clustering}
	\author{St\'ephane Chr\'etien \footnote{National Physical Laboratory,
			Mathematics and Modelling, Hampton Road, 
			Teddington, TW11 OLW, UK} \ , Cl\'ement Dombry \footnote{Laboratoire de Math\'ematiques de Besan\c{c}on, UMR CNRS 6623, Universit\'e de Bourgogne Franche-Comt\'e, 16 route de Gray, 25030 Besan{\c c}on cedex, France. Email: clement.dombry@univ-fcomte.fr} \ and Adrien Faivre \footnote{Digital Surf, 16 Rue Lavoisier, 25000 Besan{\c c}on, France}} 
	\maketitle
	\abstract{This paper proposes a variant of the method of Gu\'edon and Verhynin for estimating the cluster matrix in the Mixture of Gaussians framework via Semi-Definite Programming. A clustering oriented embedding is deduced from this estimate. The procedure is suitable for very high dimensional data because it is based on pairwise distances only. Theoretical garantees are provided and an eigenvalue optimisation approach is proposed for computing the embedding. The performance of the method is illustrated 
		via Monte Carlo experiements and comparisons with other embeddings from the literature. } 
	
	\section{Introduction}
	\subsection{Motivations}
	Low dimensional embedding is a key to many modern data analysis procedures. The main underlying idea is that the data is better understood after extracting the main features of the samples. Based on a compressed description from a few extracted features, the individual samples can then be projected, visualized or clustered more reliably and efficiently. 
	
	The main embedding techniques available nowadays are PCA \cite{jolliffe2002principal} and its robust version \cite{candes2011robust}, random embeddings \cite{johnson1984extensions}, (see also the recent \cite{cannings2015random} for supervised classification),
	Laplacian Eigenmap \cite{belkin2001laplacian}, Maximum Variance Unfolding/Semi-Definite embedding \cite{weinberger2006unsupervised}, \ldots The first two techniques in 
	this list are linear embeddings methods, whereas the other are nonlinear in nature. 
	
	\smallskip
	
	In modern data science, the samples may lie in very high dimensional spaces. Our main objective in the present paper is to propose a technique for a low dimensional representation which aims at preparing the data for unsupervised clustering at the same time. Combining the goals of projecting and clustering is not new.
	This is achieved in particular by spectral clustering \cite{von2007tutorial}
	\cite[Chapter 3]{bandeira2015ten}. The SemiDefinite embedding technique 
	in \cite{linial1995geometry} is also motivated by clustering purposes. Spectral clustering is based on a Laplacian matrix constructed from the pairwise distances of the samples and whose second eigenvector is 
	proved to separate the data into two clusters using the normalized cut 
	criterion. The second eigenvector is called the Fiedler vector. The analysis 
	is usually presented from the perspective of Cheeger's relaxation and a 
	clever randomized algorithm \cite[Chapter 3]{bandeira2015ten}.
	Clustering into more than two
	groups can also be performed using a higher order Cheeger theory \cite{lee2014multiway}, a direction which has not been much explored in practice yet.
	
	\smallskip
	
	A frequent way to illustrate non-linear low dimensional embedding such 
	as Diffusion Maps is shown in
	Figure \ref{NLEmb}. 
	\begin{center}
		\begin{figure}[htb]
			\subfloat[Original 3D Cluster]{%
				\includegraphics[width=.4\textwidth]{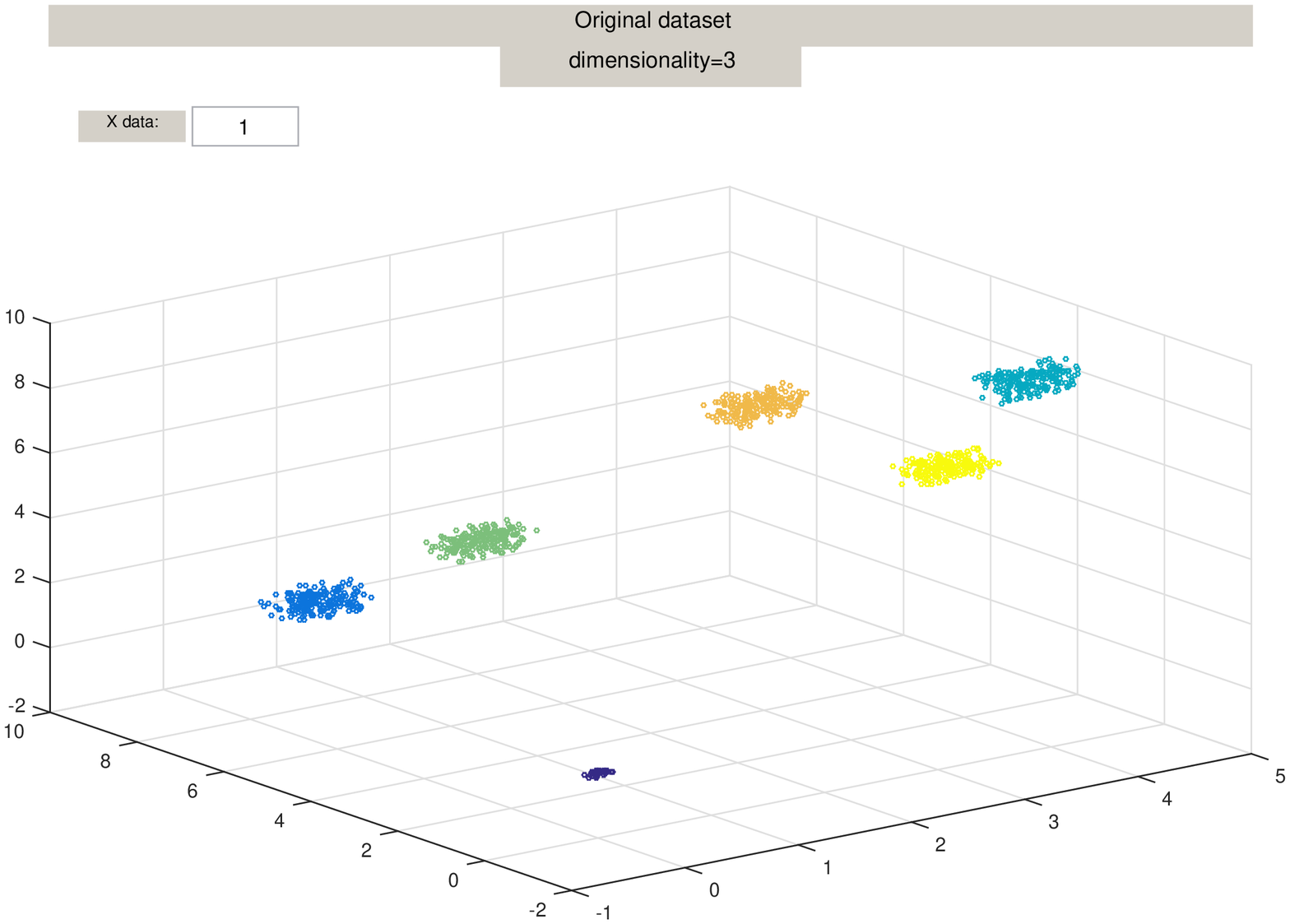}}\hfill
			\subfloat[Mapped data using Diffusion Maps]{%
				\includegraphics[width=.4\textwidth]{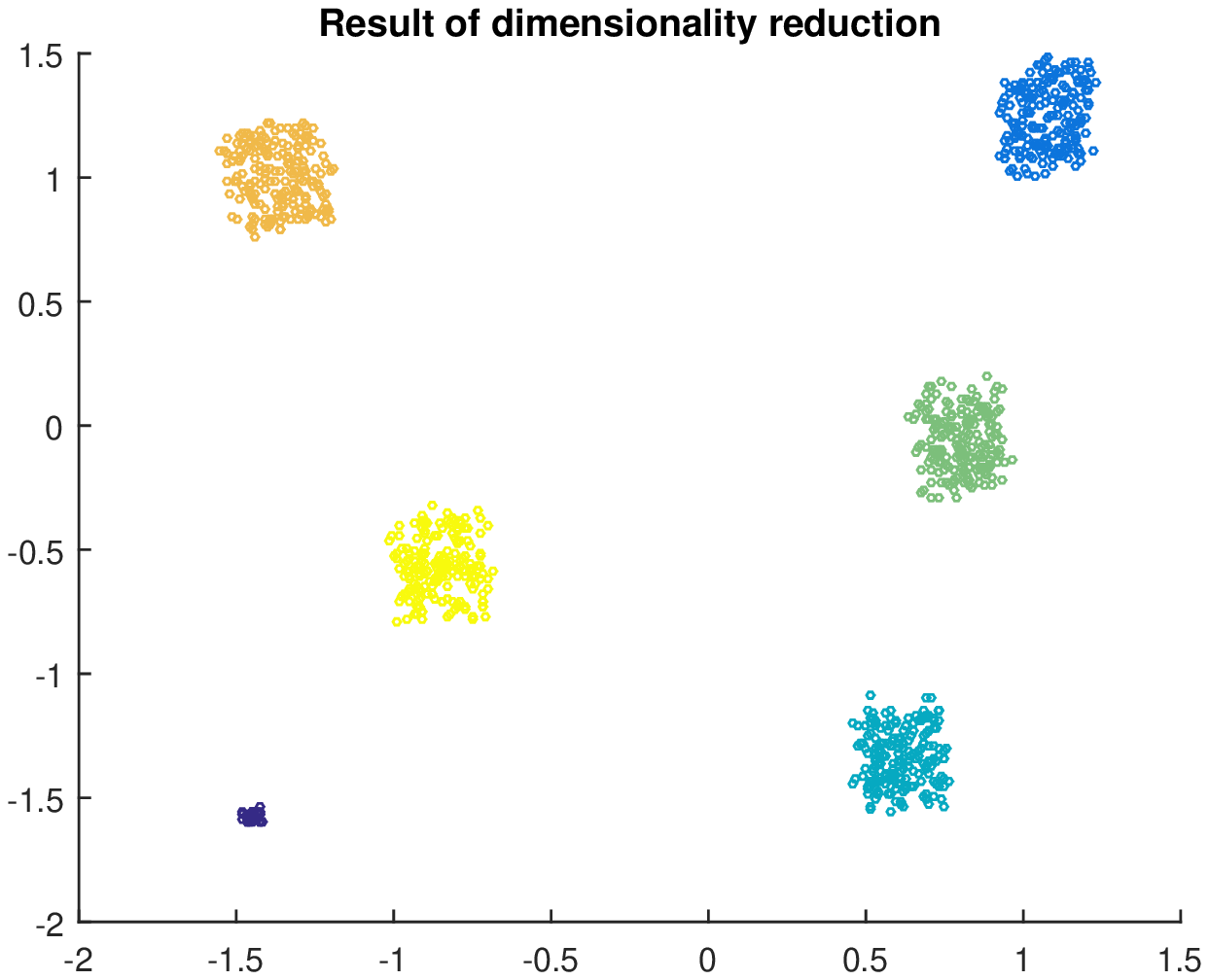}}
			\caption{The mapping of a 3D cluster using Diffusion Maps from the Matlab package drtoolbox https://lvdmaaten.github.io/drtoolbox/}\label{NLEmb}
		\end{figure}
	\end{center}	
	
	In particular, the main idea in such methods is to approximately preserve 
	the pairwise distances. Such a constraint is often inappropriate for 
	any embedding based preconditioner for any clustering technique where 
	one would like to concentrate the samples belonging to the same cluster and 
	separate the samples belonging to different clusters.
	
	\smallskip
	
	In this paper, we propose a study of Guedon and Vershynin's method 
	for finding an embedding with clustering purposes in mind. The essential 
	ingredient allowing to focus on clustering more than distance preserving 
	compression/visualization is to try to estimate the clustering matrix and use 
	spectral embedding on the cluster matrix instead of the 
	Laplacian matrix itself. 
	More precisely, the cluster matrix is the square matrix indexed by the data and whose entries are one if the associated data belong to the same cluster 
	and zero otherwise. This eigenvalue decomposition of this matrix 
	provides a perfect clustering procedure: its rank is exactly the 
	number of clusters and each data is associated 
	with exactly one eigenvector. Similarly to spectral clustering, 
	the eigenvectors give a meaningful embedding. Motivated by these
	considerations, it seems fairly reasonable to expect that a good approximation of the clustering matrix will also provide
	an efficient embedding, i.e. suitable for 
	clustering, via its eigenvalue decomposition. This intuition is supported by Remark 1.6 in \cite{guedon2015community} which we now quote: 
	{\em It may be convenient to view the cluster matrix as the adjacency matrix of the cluster graph, in which all vertices within each community are connected and there are no connections across the communities. This way, the semidefinite program takes a sparse
		graph as an input, and it returns an estimate of the cluster graph as an output. The effect of the program is thus to "densify" the network inside the communities and "sparsify" it across the communities.}

	Our goal here will thus be to approximate the clustering matrix efficiently, based on the knowledge 
	of the sample pairwise distances. Guedon and Vershynin proved that such 
	a good approximation could be found as the solution to a Semi-Definite 
	Programming (SDP) problem for community detection in the Stochastic 
	Block Model framework. We pursue this study here by considering the
	Gaussian Mixture Model framework.
	
	\subsection{Recent advances in clustering}
	
	Unsupervised clustering is a key problem in modern data analysis. Traditional approaches to clustering 
	are model based (e.g. Gaussian mixture models) or nonparametric. For mixture models, the algorithm of choice has long been the EM 
	algorithm by Dempster et al. \cite{Dempster1977}, see the monograph by McLachlan and Peel \cite{mclachlan2004finite} for an overview of finite mixture models. Nonparametric algorithms such as  $K$-means, $K$-means ++ and generalizations have been used extensively in computer science; 
	see Jain \cite{jain2010data} for a review. The main drawback of these standard approaches is that the minimization problems underlying the various procedures are not convex. Even worse, the log-likelihood function of e.g. Gaussian mixture model exhibits degenerate behavior, see Biernacki and Chr\'etien 
	\cite{biernacki2003degeneracy}.  As a result, one can never certify that such algorithms have converged to an interesting stationary point and the popularity of such methods seems to be based on their satisfactory average practical performance.
	
	\smallskip 
	
	Recently some convex minimization based methods have been proposed in the literature. A nice method using ideas similar to the LASSO is ClusterPath \cite{hocking2011clusterpath}. This very interesting and efficient method has been studied and extended in \cite{tan2015statistical}, \cite{radchenko2014consistent} and \cite{wang2016sparse}. One of the main drawbacks of this approach is the lack of a robust rule for the choice of the parameters governing the procedure although they seem to be reasonably easy to tune in practice. A closely related approach is 
	\cite{chen2015convex}.
	
	Recently, very interesting results have appeared for the closely related problem of community detection based on the stochastic block model, see Abbe et al. \cite{abbe2014exact}, Heimlicher et al. \cite{heimlicher2012community} and Mossel et al. \cite{mossel2012stochastic}. In this model, a random graph is constructed by partitioning the set of vertices $V$ into $K$ clusters $\mathcal{C}_1,\ldots,\mathcal{C}_K$ and by setting an  edge between  vertices $v$ and $v'$ with probability $p_{kk'}$ if $v\in\mathcal{C}_{k}$ and $v'\in\mathcal{C}_{k'}$. All edges are independent  and the probabilities of edges depends only on the clusters structure. It is assumed that this probability is larger within clusters, i.e.
	\begin{equation}\label{eq:p>qGuedon}
		p=\min_{1\leq k\leq K} p_{kk} > \max_{1\leq k \neq k'\leq K} p_{kk'}=q.
	\end{equation}
	This corresponds to the intuitive notion of cluster in graph theory where clusters have a higher edge density. Gu\'edon and Vershynin \cite{guedon2015community} proved that the problem of recovering the clusters from the random graph can be addressed via Semi-Definite Programming (SDP) with an explicit control of the error rate. Although not explicitly studied in their paper, the SDP can be solved efficiently thanks to a general theory, see Boyd and Vandenberghe \cite{boyd2004convex}. 
	
	\smallskip
	
	\subsection{More on the Gaussian Cluster Model}
	The mathematical framework is the following. We assume that we observe a data set $x_1,\ldots,x_n$ $\in\mathbb{R}^d$ over a population of size $n$. The population is partitioned into $K$ clusters $\mathcal{C}_1,\ldots,\mathcal{C}_K$ of size $n_1,\ldots,n_K$ respectively, i.e. $n=n_1+\cdots+n_K$. 
	We assume the standard Gaussian Cluster Model for the data: the observations $x_i$ are independent with
	\begin{equation}\label{eq:genmodel}
		x_i\sim \mathcal N(\mu_k,\Sigma_k) \quad \mbox{ if } i\in \mathcal{C}_k
	\end{equation}
	with $\mu_k\in\mathbb{R}^d$ the cluster mean and $\Sigma_k\in\mathbb{R}^{d\times d}$ the cluster covariance matrix.
	The clustering problem aims at recovering the clusters $\mathcal{C}_k$, $1\leq k\leq K$, based on the data $x_i$, $1\leq i\leq n$, only.
	For each $i=1,\ldots,n$, we will denote by $k_i$ the index of the cluster to which $i$ belongs. The notation 
	$i\sim j$ will mean that $i$ and $j$ belong to the same cluster.

	\smallskip
	
	This slightly differs from the usual setting for Gaussian unmixing. One usually assume that the data set is made of independent observations from the mixture of Gaussian distributions
	\[
	\sum_{k=1}^K \pi_k \ \mathcal N(\mu_k,\Sigma_k)
	\]
	where the vector $(\pi_k)_{1\leq k\leq K}$ gives the mixture distribution. Then the cluster sizes $(n_1,\ldots,n_k)$ are random with  multinomial distribution of size $n$ and probability parameters  $(\pi_1,\ldots,\pi_K)$. Given all the parameters of the Gaussian mixture, the probability that observation $x_i$ belongs to cluster $\mathcal C_k$ is given by
	\[
	p_k(x_i)=\frac{\pi_k \ p (x_i | \mu_k, \Sigma_k)}{\sum_{l=1}^K \pi_l \ p (x_i | \mu_l, \Sigma_l)},
	\]
	with $p(\cdot|\mu,\Sigma)$ the Gaussian distribution function. Maximizing these probabilities results in a partition of the space $\mathbb{R}^d$ into different regions $R_1,\ldots,R_K$ given by
	\[
	R_k=\left\{x\in\mathbb{R}^d; \ p (x | \mu_k, \Sigma_k)\geq p (x | \mu_l, \Sigma_l) \mbox{ for all } l\neq k\right\},\quad k=1,\ldots,K.
	\]
	The probability that an observation is misclassified is then given by
	\[
	\sum_{k=1}^K \pi_k \int 1_{x\notin R_k}p (x| \mu, \Sigma)\,\mathrm{d}x.
	\]
	Of course in practice the parameters $\pi_i,\mu_i,\Sigma_i$, $1\leq i\leq K$, are unknown and have to be estimated. The most popular approach is based on maximum likelihood estimation via the EM algorithm \cite{Dempster1977} and its variant like CEM, see C\'eleux and Govaert \cite{Celeux92}. The
	likelihood 
	\[
	L\left((\pi_k,\mu_k,\Sigma_k)_{1\leq k\leq K}\right)=\sum_{i=1}^n \sum_{k=1}^K \pi_k \: p (x_i | \mu_k, \Sigma_k)
	\] 
	may behave quite badly and exhibit degenerate behavior, making optimization via EM 
	not always reliable, see \cite{biernacki2003degeneracy}. 
	
	\smallskip
	
	Another viewpoint on the results from the present paper is to propose a low dimensional preconditioner for the Gaussian Mixture estimation problem. 
	
	\subsection{Our contribution}
	Firstly, we propose an extension of the analysis of Gu\'edon and Vershynin to the problem of low dimensional embedding via the estimation of the cluster matrix with  Gaussian clustering in mind. We provide in particular a theoretical upper bound for the misclassification rate. This adaptation is non trivial because, unlike the stochastic block model, the affinity matrix associated to Gaussian clustering does not have independent entries. Thus we need to introduce concentration inequalities for Gaussian measures, see e.g. the monograph by Boucheron et al. \cite{Boucheron2013}. Secondly, we propose a simple and scalable algorithm to solve the Semi-Definite Program based on eigenvalue optimization in the spirit of \cite{helmberg2000spectral}. Thirdly, we suggest 
	a practical way of choosing the unknown parameter $\lambda$ in the Guedon Vershynin relaxation.

	\subsection{Structure of the paper}
	The paper is organized as follows. Section \ref{sec:proof} is devoted to the proof of Proposition \ref{prop} and Theorem \ref{theo1}.  Section \ref{sec:formulae} provides explicit formulas for the expected affinity matrix $\bar A$ in the case when $f$ is the Gaussian affinity function \eqref{eq:gaussaffinity}. An efficient algorithm is described in Section \ref{sec:solving}, as well as a practical method 
	for selecting the unkown parameter.

	\section{Main results}

	\subsection{Motivation for the cluster matrix estimation approach}
	The main idea is to use the fact that the cluster matrix $\bar Z$ is a very special matrix. Indeed, if we denote by $\mathcal C_1$, \ldots, $\mathcal C_K$ the index set of each cluster, we can write $Z$ as follows: 
	\begin{align*} 
		Z & = \sum_{k=1}^K \ 1_{\mathcal{C}_k} 1_{\mathcal{C}_k}^t 
	\end{align*} 
	and thus, we conclude that 
	\begin{itemize}
		\item the rank of $\bar Z$ is $K$ 
		\item the eigenvalues of $\bar Z$ are $\sqrt{\vert \mathcal C_1\vert}$, \ldots, $\sqrt{\vert \mathcal C_K \vert}$
		\item the eigenvectors of $\bar Z$ are $1/\sqrt{\vert \mathcal C_1 \vert} \ 1_{\mathcal C_1}$, \ldots, $1/\sqrt{\vert \mathcal C_K\vert} \ 1_{\mathcal C_k}$. 
	\end{itemize}
	In the sequel, we will assume that the cluster sizes are all different. Thus, all nonzero 
	eigenvalues have multiplicity equal to one. 
	
	Based on the cluster matrix $\bar Z$, clustering is very easy: the label of each sample point $x_i$ 
	is the index of the only eigenvector whose $i^{th}$ component is non zero. Notice 
	that the $i^{th}$ component of all other eigenvectors are equal to zero. 
	
	The estimate $\widehat{Z}$ of the matrix $\bar Z$ can be used in practice to 
	embed the data into the space $\mathbb R^{\widehat{K}}$ by associating 
	each data $x_i$ to the vector consisting of the $i^{th}$ coordinate of the $\widehat{K}$ first eigenvectors of $\widehat{Z}$. Given this embedding, 
	if we can prove that $\widehat{Z}$ accurately estimates $\bar Z$, one can then 
	apply any clustering method of choice to recover the clustering pattern of the original data. The next section gives a method for computing 
	an estimator $\widehat{Z}$ of $\bar Z$.

	\subsection{Guedon and Vershynin's Semi-Definite Program for Gaussian clusters}
	We now turn to the estimation of the cluster matrix using Guedon and Vershynin's Semi-Definite Programming based approach. 
	Whereas Vershynin and Gu\'edon \cite{guedon2015community} were interested 
	in analyzing the Stochastic Block Model for community detection, we propose
	a study of the Gaussian Cluster Model and therefore prove that their 
	approach has a great potential applicability in embedding of general 
	data sets beyond the graphical model setting. 
	
	\smallskip
	
	Based on the data set $x_1,\ldots,x_n$, we construct an affinity matrix $A$ by 
	\begin{equation}\label{eq:defA}
		A=\big(f(\|x_{i}-x_{j}\|_2) \big)_{1\leq i,j\leq n} 
	\end{equation}
	where $\|\cdot\|_2$ denotes the Euclidean norm on $\mathbb{R}^d$ and $f:[0,+\infty)\to [0,1]$ an affinity function. A popular choice is the Gaussian affinity
	\begin{equation}\label{eq:gaussaffinity}
		f(h)=e^{-(h/h_0)^2},\quad h \geq 0,
	\end{equation}
	and other possibilities are
	\[
	f(h)=e^{-(h/h_0)^a}, \quad f(h)=(1+(h/h_0))^{-a}, \quad f(h)=(1+e^{h/h_0})^{-a}\quad \cdots
	\]
	
	\smallskip
	Before stating the Semi-Definite Program, we introduce some matrix notations. The  usual scalar product between matrices $A,B\in \mathbb{R}^{n\times n}$ is denoted by $\langle A,B\rangle=\sum_{1\leq i,j\leq n}A_{ij}B_{ij}$. The notations $1_n\in \mathbb{R}^{n}$ and $1_{n\times n}\in \mathbb{R}^{n\times n}$ stand for the vector and matrices with all entries equal to 1. For a symmetric matrix $Z\in\mathbb{R}^{n\times n}$, the notation $Z\succeq 0$ means that $Z$ the quadratic form associated to $Z$ is non-negative while the notation $Z\geq 0$ means that all the entries of $Z$ are non-negative. 
	
	\smallskip
	With these notations, the Semi-Definite Program writes
	\begin{equation}\label{SDP}
		\mbox{maximize }\langle A,Z\rangle \quad \mbox{subject to} \quad Z\in\mathcal{M}_{opt}
	\end{equation}
	with $\mathcal{M}_{opt}$ the set of symmetric matrices $Z\in\mathbb{R}^{n\times n}$ such that
	\begin{equation}\label{Mopt}
		\left\{\begin{array}{l}
			Z\succeq 0  \\
			Z\geq 0 \\
			\mathrm{diag}(Z)= 1_n \\
			\langle Z,1_{n\times n}\rangle= \lambda_0 
		\end{array}
		\right..
	\end{equation}
	
	Let us provide some intuitions for motivating the SDP problem \eqref{SDP}. Note that each $Z\in\mathcal{M}_{opt}$ has entries in $[0,1]$ with constant sum equal to $\lambda_0$. The SDP procedure will distribute the mass $\lambda_0$ and assign more mass to entries $Z_{ij}$ corresponding to large values of the affinity $A_{ij}=f(\|x_j-x_i\|_2)$, i.e. pairs of close points $x_i,x_j$. This mass distribution must respect symmetry and the constraint $Z\succeq 0$. For the analysis of the procedure, the main idea is that we want the solution $\widehat Z$ to be an approximation of $\bar Z$, the cluster matrix defined by
	\begin{equation}\label{barZ}
		\bar Z_{i,j}=\left\{\begin{array}{ll} 1 & \mbox{ if } i \mbox{ and  }j \mbox{ are in the same cluster} \\ 
			\\
			0 & \mbox{otherwise}\end{array} \right..
	\end{equation}
	The cluster matrix has values in $\{0,1\}$ and belongs to $\mathcal{M}_{opt}$ for $\lambda_0=\sum_{k=1}^K n_k^2$ given by the cluster sizes. In practice, $\lambda_0$ is unknown and should be estimated, see some comment in section 5.1.5. Under some natural assumption (see Equation \eqref{eq:p>q} below), the cluster matrix is the solution of the alternative SDP problem
	\begin{equation}\label{SDP2}
		\mbox{maximize } \quad \langle \bar A,Z\rangle \quad \mbox{subject to} \quad Z\in\mathcal{M}_{opt}
	\end{equation}
	where $\bar A$ denotes the expected affinity matrix defined by
	\begin{equation}\label{barA}
		\bar A= \Big(\mathrm{E}f(\|x_i-x_j\|_2)\Big)_{1\leq i,j\leq n}.
	\end{equation}
	The affinity matrix $A$ is a very noisy observation of $\bar A$ but concentration arguments together with Grothendieck theorem allow to prove  that $A\approx \bar A$ in the sense of the $\ell^{\infty}$--$\ell^1$ norm. In turn, this implies $\widehat Z \approx \bar Z$ (in the sense of $\ell^1$ norm) so that the SDP program \eqref{SDP} provides a good approximation $\widehat Z$ of the cluster matrix.
	
	\subsection{Main results}
	Our main result provides a non asymptotic upper bound for the probability that $\widehat Z$ differs from $\bar Z$ in $L^1$ distance.
	\begin{theorem}\label{theo1}
		Consider the Gaussian Cluster Model \eqref{eq:genmodel}. Assume that the affinity function $f$ is $\ell$-Lipschitz and furthermore that
		\begin{equation}\label{eq:p>q}
			p=\inf_{i\sim j} \bar A_{i,j} > q=\sup_{i\nsim j} \bar A_{i,j}.
		\end{equation}
		Let 
		\begin{align*}
			t_0 & =8\sqrt{2 \log 2}K_G \sigma \ell/(p-q).
		\end{align*}
		
		Then, for all $t > t_0=8\sqrt{2 \log 2}K_G \sigma \ell/(p-q)$,
		\begin{equation}\label{eq:theo1}
			\mathrm{P}\left( \left\|\widehat Z-\bar Z\right\|_1>n^2 t \right) \leq 2 \exp\left(-\left(\frac{t-t_0}{c}\right)^2 n \right), \quad  c=\frac{16\sqrt 2 K_G \ell \sigma}{p-q}
		\end{equation}
		where $K_G\le 1.8$ denotes the Grothendieck constant and $\sigma^2=\frac{1}{n}\sum_{k=1}^K n_k \rho(\Sigma_k)$ with $\rho(\Sigma_k)$ the largest eigenvalue of the covariance matrix $\Sigma_k$. Moreover, there exists a subset $\tau
		\subset \{1,\ldots,n\}$ with $|\tau|\ge \frac{n}2$ such that all $t > t_0$,
		\begin{equation}\label{eq:theo2}
			\mathrm{P}\left( \left\|
			\left(\widehat Z-\bar Z\right)_{\tau \times \tau}\right\|_1>n t \right) \leq 2 \exp\left(-\left(\frac{t-t_0}{c}\right)^2 n \right), \quad  c=\frac{16\sqrt 2 K_G \ell \sigma}{p-q}.
		\end{equation}
	\end{theorem}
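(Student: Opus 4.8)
The plan is to split the proof into a deterministic reduction, in the spirit of Gu\'edon and Vershynin, and a Gaussian concentration estimate that replaces the Bernstein-type bounds available in the independent (Stochastic Block Model) setting.

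First I would establish the deterministic bound
\[
\|\widehat Z-\bar Z\|_1\ \le\ \frac{4K_G}{p-q}\,\|A-\bar A\|_{\infty\to 1},
\]
which is essentially the content of Proposition \ref{prop}, where $\|B\|_{\infty\to 1}=\max_{s,t\in\{-1,1\}^n}\langle B,st^{t}\rangle$. This rests on three ingredients. (i) Optimality: since $\widehat Z$ maximises $\langle A,\cdot\rangle$ over $\mathcal M_{opt}$ and $\bar Z\in\mathcal M_{opt}$, one has $\langle A,\widehat Z-\bar Z\rangle\ge 0$, hence $\langle\bar A,\bar Z-\widehat Z\rangle\le\langle A-\bar A,\widehat Z-\bar Z\rangle$. (ii) The gap: using $\bar A_{ij}\ge p$ for $i\sim j$, $\bar A_{ij}\le q$ for $i\nsim j$, together with the two constraints $\mathrm{diag}(Z)=1_n$ and $\langle Z,1_{n\times n}\rangle=\lambda_0=\langle\bar Z,1_{n\times n}\rangle$, one shows $\langle\bar A,\bar Z-Z\rangle\ge\frac{p-q}{2}\|Z-\bar Z\|_1$ for every $Z\in\mathcal M_{opt}$; the equality between the in-cluster mass lost and the cross-cluster mass gained is exactly what produces the factor $p-q$. (iii) Grothendieck: writing any $Z\in\mathcal M_{opt}$ as a Gram matrix of unit vectors and applying Grothendieck's inequality gives $|\langle A-\bar A,Z\rangle|\le K_G\|A-\bar A\|_{\infty\to 1}$, so $|\langle A-\bar A,\widehat Z-\bar Z\rangle|\le 2K_G\|A-\bar A\|_{\infty\to 1}$. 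Combining (i)--(iii) yields the displayed bound.

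The core of Theorem \ref{theo1} is then a tail bound for $F:=\|A-\bar A\|_{\infty\to 1}$. I would parametrise the data as $x_i=\mu_{k_i}+\Sigma_{k_i}^{1/2}g_i$ with $g=(g_1,\dots,g_n)$ a standard Gaussian vector in $\mathbb R^{nd}$, so that $F=F(g)$. For concentration, $F$ is a maximum over $s,t$ of the maps $g\mapsto\langle A(g)-\bar A,st^{t}\rangle$; since $f$ is $\ell$-Lipschitz and $\|\Sigma_{k_i}^{1/2}\|_{op}=\sqrt{\rho(\Sigma_{k_i})}$, perturbing the block $g_i$ changes at most $2n$ entries, each by $\ell\sqrt{\rho(\Sigma_{k_i})}\,\|g_i-g_i'\|$, giving a block gradient bound $2n\ell\sqrt{\rho(\Sigma_{k_i})}$ and hence a global Lipschitz constant $L$ with $L^2\le 4n^2\ell^2\sum_k n_k\rho(\Sigma_k)=4n^3\ell^2\sigma^2$. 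The Gaussian (Borell--TIS) inequality then gives $\mathrm P(F>\mathrm E F+r)\le\exp(-r^2/2L^2)$. For the mean, each centred map $g\mapsto\langle A(g)-\bar A,st^{t}\rangle$ is $L$-sub-Gaussian, and taking the maximum over the $2^{2n}$ sign patterns yields $\mathrm E F\le L\sqrt{2\log 2^{2n}}$, which is of the order $n^2\ell\sigma$, that is of the order of $n^2 t_0\,(p-q)/(4K_G)$. Substituting the threshold $u=(p-q)n^2t/(4K_G)$, writing $u=\mathrm E F+r$ with $r\ge(p-q)n^2(t-t_0)/(4K_G)$, and inserting $L^2=4n^3\ell^2\sigma^2$ into the Gaussian bound produces the exponent $-((t-t_0)/c)^2 n$ with $c$ proportional to $K_G\ell\sigma/(p-q)$, which is \eqref{eq:theo1}.

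For the localised statement \eqref{eq:theo2} I would pass from the global $\ell^1$ control to a large principal submatrix. Writing $d_i=\sum_j|\widehat Z_{ij}-\bar Z_{ij}|$ for the row errors, so that $\sum_i d_i=\|\widehat Z-\bar Z\|_1$, a Markov/averaging argument selects the indices with the smallest $d_i$ to form a set $\tau$ with $|\tau|\ge n/2$ on which the rows are well behaved. The main obstacle is precisely here: the sharper normalisation in \eqref{eq:theo2} (linear rather than quadratic in $n$) cannot follow from the global $\ell^1$ bound by pigeonhole alone, since an error of total mass $n^2 t$ spread uniformly leaves mass of order $n^2 t$ on every half-sized submatrix. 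One must instead exploit the structure of the SDP solution $\widehat Z$ --- that most of its rows are in fact close to the corresponding rows of the block matrix $\bar Z$ --- by re-running the concentration estimate on the restricted problem indexed by $\tau$ after discarding the atypical rows, so that the effective error on $\tau\times\tau$ is controlled at the finer scale. I expect this localisation step, rather than the global bound, to be the delicate part of the argument.
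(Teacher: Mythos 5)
Your treatment of the first claim, \eqref{eq:theo1}, is correct and is essentially the paper's own argument: the deterministic reduction
\begin{equation*}
\left\|\widehat Z-\bar Z\right\|_1\le \frac{4K_G}{p-q}\,\left\| A-\bar A\right\|_{\infty\to 1}
\end{equation*}
obtained from optimality of $\widehat Z$, the separation estimate $\langle \bar A,\bar Z-Z\rangle\ge \frac{p-q}{2}\|\bar Z-Z\|_1$ valid on $\mathcal{M}_{opt}$, and Grothendieck's inequality (the paper's Steps 1--3), followed by Gaussian concentration for the Lipschitz functional $\|A-\bar A\|_{\infty\to 1}$ with the same Lipschitz constant $L=2\ell\sigma n^{3/2}$. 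Two small remarks: this concentration step is what the paper calls Proposition \ref{prop} (your attribution of the deterministic bound to it is a mislabel), and your count of $2^{2n}$ sign patterns (which is the correct count; the paper uses $2^n$) together with your sharper Borell--TIS constant yields the same form of bound but with constants $t_0$, $c$ differing from the stated ones by fixed factors. Structurally, though, this half matches the paper.

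The genuine gap is in the second claim, \eqref{eq:theo2}, and you partly acknowledge it yourself. Your fallback suggestion --- re-running the concentration estimate on a problem restricted to $\tau$ --- cannot work as stated: $\widehat Z$ is computed once from the full data set, so there is no restricted SDP whose solution coincides with the submatrix $(\widehat Z)_{\tau\times\tau}$, and bounding the error of some other, restricted estimator says nothing about the object appearing in \eqref{eq:theo2}. The ingredient the paper uses instead is a purely deterministic localization lemma of Grothendieck-factorization type (Proposition 5.2 of the cited Tropp SODA 2009 paper): for \emph{every} matrix $H\in\mathbb{R}^{n\times n}$ there exists $\tau\subset\{1,\ldots,n\}$ with $|\tau|\ge n/2$ such that
\begin{equation*}
\left\| H_{\tau\times\tau}\right\|_1\ \le\ \frac{2K_G}{n}\,\left\| H\right\|_{\infty\to 1}.
\end{equation*}
Applying this to the realized (random) error matrix $H=\bar Z-\widehat Z$, then using $\|H\|_{\infty\to 1}\le\|H\|_1$ and the global bound from the first part, gives
\begin{equation*}
\left\|\left(\bar Z-\widehat Z\right)_{\tau\times\tau}\right\|_1\ \le\ \frac{8K_G^2}{n\,(p-q)}\,\left\| A-\bar A\right\|_{\infty\to 1},
\end{equation*}
and the same concentration step as before then yields the $nt$ (rather than $n^2t$) scaling. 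Your diagnosis that a Markov/pigeonhole argument on row sums cannot produce the extra factor $1/n$ is exactly right --- that is precisely why this factorization-based lemma is needed --- but without it (or an equivalent tool) your proof of the second half does not go through.
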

	Condition \eqref{eq:p>q} ensures that the affinity matrix $\bar A$ allows to identify the clusters and appears also in \cite{guedon2015community}, see Eq. \eqref{eq:p>qGuedon}. In the case of the Gaussian affinity function \eqref{eq:gaussaffinity}, we provide in Section \ref{sec:formulae}  explicit formulae for the expected affinity matrix that can be used to check condition \eqref{eq:p>q}.
	
	Theorem \ref{theo1} has a simple consequence in terms of estimation error rate. After computing $\widehat Z$, it is natural to estimate the cluster graph $\bar Z$ by a random graph obtained  by putting an edge between vertices $i$ and $j$ if $\widehat Z_{i,j}>1/2$ and no edge otherwise. Then the proportion $\pi_n$ of errors in the prediction of the $n(n-1)/2$ edges is given by 
	\begin{eqnarray*}
		\pi_n&:=&\frac{2}{n(n-1)} \sum_{1\leq i<j\leq n} | 1_{\{\widehat Z_{ij}>1/2\}}-\bar Z_{ij}|\\
		&\leq& \frac{2}{n(n-1)} \sum_{1\leq i<j\leq n} 2\ | \widehat Z_{ij}-\bar Z_{ij}| = \frac{2}{n(n-1)} \left\|\widehat Z-\bar Z\right\|_1 .
	\end{eqnarray*}
	The following corollary provides a simple bound for the asymptotic error.
	\begin{corollary}\label{cor1}
		We have almost surely
		\[
		\limsup_{n\to\infty} n^{-2}\left\|\widehat Z-\bar Z\right\|_1 \leq  t_0= \frac{8\sqrt{2 \log 2}K_G \sigma \ell}{p-q} .
		\]
	\end{corollary}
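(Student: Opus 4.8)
The plan is to deduce the almost-sure bound from the non-asymptotic tail estimate \eqref{eq:theo1} by a first Borel--Cantelli argument. The key observation is that the right-hand side of \eqref{eq:theo1} decays exponentially in $n$, so that for any fixed threshold strictly above $t_0$ the associated series of probabilities converges; summable tail probabilities then force the corresponding events to occur only finitely often with probability one.

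More precisely, I would fix $\epsilon>0$ and set $t=t_0+\epsilon$, which satisfies $t>t_0$ so that Theorem \ref{theo1} applies. Define
\[
A_n(\epsilon)=\left\{\, n^{-2}\left\|\widehat Z-\bar Z\right\|_1>t_0+\epsilon \,\right\}.
\]
By \eqref{eq:theo1} we have
\[
\mathrm{P}\big(A_n(\epsilon)\big)\le 2\exp\left(-\left(\frac{\epsilon}{c}\right)^2 n\right),
\]
and hence $\sum_{n\ge 1}\mathrm{P}\big(A_n(\epsilon)\big)<\infty$, the summand being of geometric type in $n$. The first Borel--Cantelli lemma then yields $\mathrm{P}\big(A_n(\epsilon)\ \text{i.o.}\big)=0$, i.e.\ almost surely $n^{-2}\|\widehat Z-\bar Z\|_1\le t_0+\epsilon$ for all $n$ large enough. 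Consequently $\limsup_{n\to\infty} n^{-2}\|\widehat Z-\bar Z\|_1\le t_0+\epsilon$ on an event of probability one.

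To remove the $\epsilon$, I would run the previous step along the countable sequence $\epsilon=1/m$, $m\ge 1$, and intersect the resulting probability-one events. On this still probability-one intersection we have $\limsup_{n\to\infty} n^{-2}\|\widehat Z-\bar Z\|_1\le t_0+1/m$ for every $m$, and letting $m\to\infty$ gives the claimed inequality $\limsup_{n\to\infty} n^{-2}\|\widehat Z-\bar Z\|_1\le t_0$ almost surely.

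The argument is essentially routine once Theorem \ref{theo1} is available, and I do not anticipate a genuine difficulty. The only point requiring care is the dependence of $t_0$ and $c$ on $n$ through $\sigma^2=\frac1n\sum_{k=1}^K n_k\rho(\Sigma_k)$: this quantity is a convex combination of the fixed values $\rho(\Sigma_k)$, so it remains bounded, and under the natural assumption that the cluster proportions $n_k/n$ converge it admits a finite limit. Replacing $t_0$ and $c$ by suitable uniform bounds (or their limiting values) in the estimate above absorbs this dependence into the $\epsilon$-slack and leaves the conclusion unchanged, so the main work is simply the bookkeeping of this $n$-dependence together with the verification that the exponential series converges.
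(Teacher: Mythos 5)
Your proposal is correct: the paper states Corollary \ref{cor1} as an immediate consequence of Theorem \ref{theo1} without writing out a proof, and the Borel--Cantelli argument you give (fixing $t=t_0+\epsilon$, summing the exponential tail bound, then intersecting over $\epsilon=1/m$) is exactly the standard route the paper implicitly relies on. Your closing remark on the $n$-dependence of $\sigma$ (hence of $t_0$ and $c$) through the cluster proportions is a point the paper itself glosses over, and your handling of it --- uniform bounds absorbing the dependence into the $\epsilon$-slack --- is sound.
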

	In the case when the cluster means are pairwise different and fixed while the cluster variances converge to $0$, i.e. $\sigma \to 0$, it is easily seen that the right hand side of the above inequality behaves as $O(\sigma)$ so that the error rate converges to $0$. This reflects the fact that when all clusters concentrates around their means, clustering becomes trivial. 
	
	\begin{remark}
		Theorem \ref{theo1} assumes that  $\lambda_0$ is known. It is worth noting that $\lambda_0$ corresponds to  the number of edges in the cluster graph and that we can derive from the proof of Theorem \ref{theo1} how the algorithm behaves  when the cluster sizes are unknown, i.e. when the unknown parameter  $\lambda_0$ is replaced by a different value $\lambda$.  The intuition is given in Remark 1.6 of Gu\'edon and Vershynin: if $\lambda<\lambda_0$, the solution $\widehat Z$ will estimate a certain subgraph of the cluster graph with at most $\lambda_0-\lambda$ missing edges; if $\lambda>\lambda_0$, the solution  $\widehat Z$ will estimate a certain supergraph of the cluster graph with at most $\lambda_0-\lambda$ extra-edges.
	\end{remark}
	
	While our proof of Theorem \ref{theo1} follows the ideas from Vershynin and Gu\'edon \cite{guedon2015community}, we need to introduce new tools to justify the approximation $A\approx \bar A$ in $\ell^{\infty}\rightarrow \ell^{1}$-norm. Indeed, unlike in the stochastic block model, the entries of the affinity matrix \eqref{eq:defA} are not independent. We use Gaussian concentration measure arguments to obtain the following concentration inequality. The $\ell^\infty$--$\ell^1$ norm of a matrix $M\in\mathbb{R}^{n\times n}$ is defined by
	\begin{equation}\label{eq:inftyto1}
		\|M\|_{\infty\to 1}=\sup_{\|u\|_\infty\leq 1} \|Au\|_1=\max_{u, \ v\in\{-1,1\}^n} \sum_{i,j=1}^n u_iv_j M_{i,j}.
	\end{equation}
	\begin{proposition}\label{prop}
		Consider the Gaussian mixture model \eqref{eq:genmodel} and assume the affinity function $f$ is $\ell$-Lipschitz.
		Then, for any $t > 2 \ \sqrt{2 \log 2} \ \ell \ \sigma$,
		\begin{equation}\label{eq:concentration}
			\mathrm{P} \Big(\left\| A-\bar{A}\right\|_{\infty \rightarrow 1} > t\ n^2\Big)  \leq 2 \exp \left( -\frac{\left(t-2\sqrt{2 \log 2}\ell \sigma \right)^2}{32 \ell^2 \sigma^2} \ n\right).
		\end{equation}
	\end{proposition}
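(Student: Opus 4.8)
The plan is to put the Gaussian concentration of measure phenomenon to work after rewriting $\|A-\bar A\|_{\infty\to 1}$ as the maximum of a family of Lipschitz functions of a \emph{single} standard Gaussian vector. Writing $x_i=\mu_{k_i}+\Sigma_{k_i}^{1/2}g_i$ with $g=(g_1,\dots,g_n)$ a standard Gaussian vector in $\mathbb{R}^{nd}$, and using the combinatorial expression for the norm in \eqref{eq:inftyto1}, I would set, for each pair of sign vectors $u,v\in\{-1,1\}^n$,
\[
\Phi_{u,v}(g)=\sum_{i,j=1}^n u_iv_j f(\|x_i-x_j\|_2),\qquad X_{u,v}=\Phi_{u,v}(g)-\mathrm{E}\,\Phi_{u,v}(g).
\]
Since $\bar A=\mathrm{E}\,A$ and flipping the sign of $u$ negates the bilinear form, one has $\|A-\bar A\|_{\infty\to 1}=\max_{u,v}X_{u,v}=:G(g)$, so the whole problem reduces to controlling a maximum of $4^n$ centred functionals of the same Gaussian vector.

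The key step is to bound the Lipschitz constant of $\Phi_{u,v}$ as a function of $g$. The dependence on $g_m$ enters only through $x_m$ and only in the pairs containing the index $m$; collecting the $i=m$ and $j=m$ contributions gives
\[
\nabla_{g_m}\Phi_{u,v}=\Sigma_{k_m}^{1/2}\sum_{j\neq m}(u_mv_j+v_mu_j)\,f'(\|x_m-x_j\|_2)\,\frac{x_m-x_j}{\|x_m-x_j\|_2}.
\]
Using $|f'|\le\ell$ (as $f$ is $\ell$-Lipschitz), $|u_mv_j+v_mu_j|\le 2$, the unit norm of the direction vectors and $\|\Sigma_{k_m}^{1/2}\|_{\mathrm{op}}=\sqrt{\rho(\Sigma_{k_m})}$, I would get $\|\nabla_{g_m}\Phi_{u,v}\|_2\le 2\ell(n-1)\sqrt{\rho(\Sigma_{k_m})}$ and hence, summing over $m$ and recalling $\sigma^2=\tfrac1n\sum_k n_k\rho(\Sigma_k)$,
\[
\|\nabla\Phi_{u,v}\|_2^2=\sum_{m=1}^n\|\nabla_{g_m}\Phi_{u,v}\|_2^2\le 4\ell^2(n-1)^2\sum_{m=1}^n\rho(\Sigma_{k_m})=4\ell^2(n-1)^2n\,\sigma^2.
\]
Thus each $\Phi_{u,v}$, and therefore their maximum $G$, is $L$-Lipschitz in $g$ with $L$ of order $\ell n^{3/2}\sigma$. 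One must be slightly careful that $\|x_m-x_j\|_2$ is only Lipschitz, not smooth, so the gradient bound holds almost everywhere and is transferred to a genuine Lipschitz bound on $\Phi_{u,v}$.

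With the Lipschitz constant in hand, the Gaussian concentration inequality (e.g. Boucheron et al.) shows that each centred $X_{u,v}$ is subgaussian with variance proxy $L^2$, and that $G=\max_{u,v}X_{u,v}$, being itself $L$-Lipschitz, satisfies $\mathrm{P}(G>\mathrm{E}\,G+r)\le e^{-r^2/(2L^2)}$. To convert the $4^n$ cardinality into the additive shift of \eqref{eq:concentration} rather than a ruinous union-bound factor, I would bound the mean by the maximal inequality $\mathrm{E}\,G\le L\sqrt{2\log(4^n)}$, which is of order $\ell\sigma n^2$; this is the source of the threshold $t_0=2\sqrt{2\log 2}\,\ell\sigma$ (an upper bound for $\mathrm{E}\,G/n^2$). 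Substituting $r=tn^2-\mathrm{E}\,G\ge n^2(t-t_0)$ into the concentration bound, and a careful accounting of the constants, then yields \eqref{eq:concentration}, the factor $2$ coming from the two-sided form of Gaussian concentration.

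The main obstacle, and the point where this argument departs from the stochastic block model analysis of Gu\'edon and Vershynin, is precisely the non-independence of the entries of $A$: the quantities $f(\|x_i-x_j\|_2)$ all share the underlying vectors $x_i$, so one cannot apply Bernstein- or Hoeffding-type bounds to independent coordinates. Packaging the bilinear form as a single Lipschitz functional of $g$ and computing the gradient bound above is exactly what replaces the independence argument. The remaining delicate point is to ensure that the discretisation over the exponentially many sign patterns costs only the additive $\log 2$ shift, through the interplay of the maximal inequality for $\mathrm{E}\,G$ and the \emph{dimension-free} Gaussian concentration of $G$, and not a factor that would overwhelm the $e^{-cn}$ decay.
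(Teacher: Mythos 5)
Your strategy is essentially the paper's: write $\|A-\bar A\|_{\infty\to 1}=\max_{u,v\in\{-1,1\}^n}F_{uv}$ with each $F_{uv}$ a centred functional of the standardized Gaussian vector, bound its Lipschitz constant (your gradient computation gives exactly the paper's constant $L=2\ell\sigma n^{3/2}$, up to the harmless $(n-1)$ versus $n$), control the expectation of the maximum by the sub-Gaussian maximal inequality, and conclude by Gaussian concentration of the ($L$-Lipschitz) maximum, shifting by the mean. Structurally nothing is missing, and your handling of the dependence issue is the same as the paper's.

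The gap is quantitative, but it matters because the threshold $t_0$ is part of the statement. By your own computation, with $N=4^n$ sign pairs the maximal inequality gives $\mathrm{E}\,G\le L\sqrt{2\log 4^n}=2\ell\sigma n^{3/2}\cdot 2\sqrt{n\log 2}=4\sqrt{\log 2}\,\ell\sigma n^2$, and $4\sqrt{\log 2}$ is \emph{not} equal to $2\sqrt{2\log 2}$; it is larger by a factor $\sqrt 2$. So your claim that $2\sqrt{2\log 2}\,\ell\sigma$ upper bounds $\mathrm{E}\,G/n^2$ does not follow from what you proved, and the substitution $r=tn^2-\mathrm{E}\,G\ge n^2\left(t-t_0\right)$ is unjustified for $t$ in the range $\bigl(2\sqrt{2\log 2}\,\ell\sigma,\,4\sqrt{\log 2}\,\ell\sigma\bigr]$, where $r$ may be negative. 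As written, your argument establishes \eqref{eq:concentration} only for $t>4\sqrt{\log 2}\,\ell\sigma$ (admittedly with a better exponent, since you use the sharp tail $e^{-r^2/(2L^2)}$ where the paper's appendix inequality has $8L^2$, which is where the $32$ in \eqref{eq:concentration} comes from). For comparison, the paper obtains its smaller threshold by invoking the maximal inequality with $\log 2^n$ rather than $\log 4^n$ at exactly this step; your count of $4^n$ is the honest one, and it exposes a genuine $\sqrt 2$ tension with the stated constant, but you cannot simultaneously count $4^n$ configurations and claim the $2^n$ constant. To prove the proposition as stated you would need either a reduction of the maximum to roughly $2^n$ functionals (the symmetries $F_{uv}=F_{vu}=F_{-u,-v}$ only shave a constant off $\log N$, not the factor of $2$ needed), or to accept the larger threshold.
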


	\section{Proofs}\label{sec:proof}
	
	\subsection{Proof of Proposition \ref{prop}}
	\begin{proof}
		The concentration of the affinity matrix $A$ around its mean $\bar A$ follows from  concentration inequalities for Lipschitz function of independent standard Gaussian variables, see Appendix \ref{app:logsob}. From definition \eqref{eq:inftyto1} 
		\begin{equation}\label{defA}
			\|A-\bar A\|_{\infty\to 1}=\max_{u,v\in\{-1,1\}^n} F_{uv} \quad \mbox{with}\quad  F_{uv}=\sum_{i,j=1}^n u_iv_j(A_{i,j}-\bar A_{i,j}).
		\end{equation}
		We introduce the standardized observations: if $x_i$ is in cluster $\mathcal{C}_{k_i}$, i.e. $x_i\sim \mathcal{N}(\mu_{k_i},\Sigma_{k_i})$, then  $y_i=\Sigma_{k_i}^{-1/2}(x_i-\mu_{k_i})$, $1\leq i\leq n$ are independent identically distributed random variables with standard Gaussian distribution. In view of definition \eqref{defA}, the random variables $F_{uv}$ can be expressed in terms of the standardized observations 
		\begin{eqnarray*}
			F_{uv}(y_1,\ldots,y_n)&=&2\sum_{1\leq i<j\leq n} u_iv_j \left[f\left(\left\|\Sigma_{k_j}^{1/2}y_j-\Sigma_{k_i}^{1/2}y_i+\mu_{k_j}-\mu_{k_i}\right\|_2\right)-\bar A_{i,j})\right].
		\end{eqnarray*}
		We prove next that the function $F_{uv}:\mathbb{R}^{p\times n}\to\mathbb{R}$ is $L$-Lipschitz with 
		$L=2  \ell\sigma n^{3/2}$. Indeed, for $(y_1,\ldots,y_n), (y'_1,\ldots,y'_n) \in \mathbb{R}^{p\times n}$, we have
		\begin{eqnarray*}
			\left|F_{uv}(y_1,\ldots,z_n)-F_{uv}(y_1,\ldots,z_n)\right|
			&\leq &\ell\sum_{1\leq i\neq j\leq n} \|x_i-x'_i\|_2+\|x_j-x'_j\|_2 \\
			&= & 2(n-1)\ell\sum_{i=1}^n \|\Sigma_{k_i}^{1/2}(z_i-z'_i)\|_2\\
			&\leq & 2n\ell\sum_{i=1}^n \rho(\Sigma_{k_i})^{1/2} \|z_i-z'_i\|_2\\
			&\leq & 2\ell\sigma n^{3/2}\|(z_1,\ldots,z_n)-(z_1',\ldots,z_n') \|_2.
		\end{eqnarray*}
		In the first inequality, we use the fact that $f$ is $\ell$-Lipschitz. The second inequality relies on the fact that all the eigenvalues of $\Sigma_{k_i}^{1/2}$ are smaller that $\rho(\Sigma_{k_i})$. The last inequality relies on Cauchy-Schwartz inequality and on the definition $\sigma^2=\frac{1}{n} \sum_{i=1}^n \max_{1\leq k\leq K} \rho(\Sigma_{k_i})$.
		
		\smallskip
		Thanks to this Lipschitz property,  the Tsirelson-Ibragimov-Sudakov inequality (Theorem \ref{thm:lip} in the Appendix) implies
		\[
		\mathrm{E}\left[\exp(\theta F_{uv})\right] \leq \exp\left( L^2\theta^2/2\right)\quad \mbox{for all } \ \theta\in\mathbb{R}
		\]
		and we deduce from Theorem \ref{thm:Dudley} that
		\begin{eqnarray*}
			\mathrm{E}\left[ \|A-\bar A\|_{\infty\to 1}\right]
			&=&\mathrm{E}\left[ \max_{u,v\in\{-1,1\}^n} F_{uv} \right] \\
			\\
			&\leq &   \sqrt{2 L^2 \log 2^n} = 2\sqrt{ 2\log 2} \ell \sigma n^2 .
		\end{eqnarray*}
		On the other hand, the function  $\max_{u,v\in\{-1,1\}^n} F_{uv}$ is also $L$-Lipschitz and Theorem \ref{thm:lip} implies
		\begin{eqnarray*}
			\mathrm{P} \left( | \|A-\bar A\|_{\infty\to 1} - \mathrm{E} \|A-\bar A\|_{\infty\to 1}  | > t \right)
			&=& \mathrm{P} \left( | \max_{u,v\in\{-1,1\}^n} F_{uv} - \mathrm{E} \max_{u,v\in\{-1,1\}^n} F_{uv}  | > t \right)\\
			&\leq & 2 \exp \left(-\frac{t^2}{8 L^2} \right) .
		\end{eqnarray*}
		Combining these different estimates, we obtain  for  $t > 2\sqrt{2 \log 2}\ell \sigma$,
		\begin{eqnarray*}
			&&\mathrm{P} (\left\| A-\bar{A}\right\|_{\infty \rightarrow 1} > t n^2) \\
			&\leq& \mathrm{P} \left(\left|\left\| A-\bar{A}\right\|_{\infty \rightarrow 1} - \mathrm{E} \left\| A-\bar{A}\right\|_{\infty \rightarrow 1} \right|
			> (t-2\sqrt{2 \log 2} \ell \sigma) n^2 )\right) \\
			&\leq& 2 \exp \left( -\frac{\left(t-2\sqrt{2 \log 2}\ell \sigma \right)^2}{32 \ell^2 \sigma^2}n\right).
		\end{eqnarray*}
	\end{proof}

	\subsection{Proof of Theorem \ref{theo1}}
	
	The proof follows the same lines as in Gu\'edon and Vershynin \cite{guedon2015community} and we provide the main ideas for the sake of completeness. The proof is divided into 4 steps.
	
	\subsubsection{Step 1}  We show $\bar Z$ solves the SDP problem \eqref{SDP2}.
	This corresponds to Lemma 7.1 in \cite{guedon2015community}. This is proved simply as follows. Since $\mathcal{M}_{opt}\subset [0,1]^{n\times n}$, we transform the SDP problem \eqref{SDP2} into the simpler problem
	\[
	\textrm{maximize} \quad \langle Z,\bar A\rangle \quad \mbox{subject to the constraints}\ Z\in[0,1]^{n\times n} \mbox{ and } \langle Z,1_{n\times n}\rangle=\lambda_0.
	\]   
	In order to solve this second problem, the mass $\lambda_0$ has to be assigned to the $\lambda_0$ entries where $\bar A_{ij}$ is maximal. Thanks to \eqref{eq:p>q}, this corresponds exactly to the cluster matrix $\bar Z$. One can then check a posteriori that $\bar Z\in\mathcal{M}_{opt}$ so that in fact the original SDP problem \eqref{SDP2} has been solved.
	
	\subsubsection{Step 2} We now prove that
	\begin{equation}\label{eq:step2}
		\langle \bar A,\bar Z \rangle-2 K_G \| A-\bar A\|_{\infty\to 1}\leq \langle\bar A,\widehat Z\rangle \leq \langle \bar A,\bar Z\rangle
	\end{equation}
	with $K_G$ denoting Grothendieck's constant.\\
	The upper bound follows directly from step 1. For the lower bound, we use the definition of $\widehat Z$ as a maximizer and write 
	\begin{eqnarray*}
		\langle\bar A,\widehat Z\rangle &=& \langle A,\widehat Z\rangle + \langle\bar A-A,\widehat Z\rangle \\
		&\geq & \langle A,\bar Z\rangle - \langle A-\bar A,\widehat Z\rangle\\
		&=& \langle \bar A,\bar Z\rangle + \langle A-\bar A,\bar Z\rangle - \langle A-\bar A,\widehat Z\rangle.
	\end{eqnarray*}
	Grothendieck's inequality   implies that for every $Z\in\mathcal{M}_{opt}$,
	\[
	\left| \langle A-\bar A,\widehat Z\rangle \right| \leq K_G \| A-\bar A\|_{\infty\to 1}.
	\]
	See Theorem \ref{theoG} and Lemma \ref{lemG} in the Appendix. Using this, we get
	\begin{align}
		2K_G \| A-\bar A\|_{\infty\to 1} & \ge \langle\bar A,\widehat Z-\bar Z\rangle.
		\label{grothineg}
	\end{align}
	
	\subsubsection{Step 3} We show that for every $Z\in\mathcal{M}_{opt}$,
	\begin{equation}\label{eq:step3}
		\langle \bar A,\bar Z-Z\rangle \geq \frac{p-q}{2} \left\|\bar Z-Z\right\|_1.
	\end{equation}
	This corresponds to  Lemma 7.2 in \cite{guedon2015community} and shows that the expected objective function distinguishes points.
	Introducing the set 
	\begin{align}
		\mathrm{In}=\cup_{k=1}^K \mathcal{C}_k\times \mathcal{C}_k
	\end{align}of edges within clusters and the set
	\begin{align}
		\mathrm{Out}=\{1,\ldots,n\}^2\setminus \mathrm{In}
	\end{align} 
	of edges across clusters, we decompose the scalar product
	\[
	\langle \bar A,\bar Z-Z\rangle=\sum_{(i,j)\in\mathrm{In}} \bar A_{ij}(\bar Z_{ij}-Z_{ij})-\sum_{(i,j)\in\mathrm{Out}} \bar A_{ij}(Z_{ij}-\bar Z_{ij}).
	\]
	Note that the definition of the cluster matrix \eqref{barZ} implies that $\bar Z_{ij}-Z_{ij}\geq 0$ if $(i,j)\in\mathrm{In}$ and $\bar Z_{ij}-Z_{ij}\leq 0$ if $(i,j)\in\mathrm{Out}$. This together with condition \eqref{eq:p>q}  implies
	\begin{align*}
		\langle \bar A,\bar Z-Z\rangle\geq p \sum_{(i,j)\in\mathrm{In}} (\bar Z-Z)_{ij}-q\sum_{(i,j)\in\mathrm{Out}} (Z-\bar Z)_{ij}.
	\end{align*}
	Introduce $S_{\mathrm{In}}=\sum_{(i,j)\in\mathrm{In}} (\bar Z-Z)_{ij}$ and $S_{\mathrm{Out}}=\sum_{(i,j)\in\mathrm{Out}} (\bar Z-Z)_{ij}$. Since 
	$\langle \bar Z, 1_{n\times n}\rangle=\langle  Z, 1_{n\times n}\rangle=\lambda_0$, we have $S_{\mathrm{In}}-S_{\mathrm{Out}}=0$. On the other
	hand $S_{\mathrm{In}}+S_{\mathrm{Out}}=\left\|\bar Z-Z\right\|_1$. We deduce exact expressions for $S_{\mathrm{In}}$ and $S_{\mathrm{Out}}$ and the lower bound
	\begin{align}
		\langle \bar A,\bar Z-Z\rangle\geq \frac{p-q}{2} \left\|\bar Z-Z\right\|_1.
		\label{l1bnd}
	\end{align}
	
	Combining \eqref{grothineg}, \eqref{l1bnd} and \eqref{grothfact}, we obtain 
	\begin{align}
		\left\|\bar Z -\widehat Z\right\|_1 \leq  \frac1{n} \frac{4K_G}{p-q} \ 
		\|A-\bar A\|_{\infty\to 1}.
		\label{keyeq}
	\end{align}
	
	\subsubsection{Proof of \eqref{eq:theo1}} 
	Using \eqref{keyeq}, we may deduce that 
	\begin{align*}
		\mathrm{P} \left(\left\| Z-\bar{Z}\right\|_{1} > t\ n^2 \right) \leq \mathrm{P}  \left(\left\| A-\bar{A}\right\|_{\infty \rightarrow 1} > t \ \frac{p-q}{4 K_G} \ n^2\right).
	\end{align*}
	and \eqref{eq:theo1} follows then directly from \eqref{eq:concentration}.
	
	\subsubsection{Proof of \eqref{eq:theo2}} 
	For every matrix $H \in \mathbb R^{n\times n}$, we have that 
	\begin{align}
		\left\|G\right\|_1 & \ge \left\|G\right\|_{\infty \rightarrow 1}.
		\label{linfty1gel1}
	\end{align}
	Using Proposition 5.2 in \cite{Tropp:SODA09}, we obtain that 
	there exists a subset $\tau \in \{1,\ldots,n\}$ such that 
	$|\tau|\ge \frac{n}2$ and 
	\begin{align*}
		\Vert H_{\tau \times \tau} \Vert_1 & \le \frac{2 K_G}{n} \ \Vert H\Vert_{\infty \rightarrow 1}.
	\end{align*}
	Therefore, taking $H=\bar Z -\widehat Z$, we obtain that 
	\begin{align}
		\Big\Vert \left(\bar Z -\widehat Z\right)_{\tau \times \tau} \Big\Vert_1 & \le \frac{2 K_G}{n} \ \Big\Vert \left(\bar Z -\hat Z\right)_{\tau \times \tau} \Big\Vert_{\infty \rightarrow 1}.
	\end{align}
	Using \eqref{linfty1gel1}, we obtain 
	\begin{align}
		\Big\Vert \left(\bar Z -\widehat Z\right)_{\tau \times \tau} \Big\Vert_1 & \le \frac{2 K_G}{n} \ \Big\Vert \left(\bar Z -\hat Z\right)_{\tau \times \tau} \Big\Vert_{1}.
		\label{grothfact}
	\end{align}
	Combining this last equation with \eqref{keyeq}, we obtain 
	\begin{align*}
		\left\|\left(\bar Z -\widehat Z\right)_{\tau \times \tau}\right\|_1 \leq  \frac1{n} \frac{8K_G^2}{p-q} \ \| A-\bar A\|_{\infty\to 1}.
	\end{align*}
	We thus may deduce that 
	\begin{align*}
		\mathrm{P} \left(\left\| \left(\bar Z -\widehat Z\right)_{\tau \times \tau}\right\|_1 > t\ n^2 \right) \leq \mathrm{P}  \left(\left\| A-\bar{A}\right\|_{\infty \rightarrow 1} > t \ \frac{p-q}{4 K_G} \ n^2\right).
	\end{align*}
	and \eqref{eq:theo2} follows then directly from \eqref{eq:concentration}.
	
	\section{Explicit formul{\ae} for the expected affinity matrix}\label{sec:formulae}
	In order to check condition \eqref{eq:p>q}, explicit formulas for the mean affinity matrix are useful. 
	\begin{proposition}
		Assume that $A$ is build using the Gaussian affinity function \eqref{eq:gaussaffinity}.
		\begin{itemize}
			\item Let $i$ and $j$  be in the same cluster $\mathcal{C}_{k}$.
			Then, 
			\[
			\bar A_{i,j}=\prod_{l=1}^d \left(1+4(\sigma_{k,l}/h_0)^2 \right)^{-1/2} 
			\]
			with $(\sigma_{k,l}^2)_{1\leq l\leq d}$ the eigenvalues of $\Sigma_k$.
			\item Let $i$ and $j$  be in different clusters $\mathcal{C}_{k}$ and $\mathcal{C}_{k'}$. Then,
			\[
			\bar A_{i,j}=\prod_{l=1}^d \exp\left(-\frac{\langle \mu_k-\mu_{k'},v_{k,k',l} \rangle^2}{h_0^2+2\sigma_{k,k',l}^2} \right) 
			\left(1+2(\sigma_{k,k',l}/h_0)^2 \right)^{-1/2} 
			\]
			with $(\sigma_{k,k',l}^2)_{1\leq l\leq d}$ and $(v_{k,k',l})_{1\leq l\leq d}$ respectively the eigenvalues and eigenvectors of 
			$\Sigma_k+\Sigma_{k'}$.
		\end{itemize}
		
	\end{proposition}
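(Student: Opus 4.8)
The plan is to reduce both formulas to a single Gaussian moment computation. Since $f(h)=e^{-(h/h_0)^2}$, we have $\bar A_{i,j}=\mathrm{E}\,e^{-\|x_i-x_j\|_2^2/h_0^2}$, so everything hinges on the law of the difference $W=x_i-x_j$. First I would record that, because $x_i$ and $x_j$ are independent Gaussian vectors, $W$ is again Gaussian: in the same-cluster case $W\sim\mathcal N(0,2\Sigma_k)$, while in the distinct-cluster case $W\sim\mathcal N(\mu_k-\mu_{k'},\Sigma_k+\Sigma_{k'})$. Thus both statements follow once I evaluate $\mathrm{E}\,e^{-\|W\|_2^2/h_0^2}$ for a generic $W\sim\mathcal N(m,\Sigma)$ in $\mathbb R^d$.

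The key step is to diagonalize and factorize. Writing $\Sigma=V\,\diag(\lambda_1,\ldots,\lambda_d)\,V^t$ with $V$ orthogonal and columns $v_1,\ldots,v_d$, I would set $u=V^tW$, which is Gaussian with independent coordinates $u_l\sim\mathcal N(a_l,\lambda_l)$, where $a_l=\langle v_l,m\rangle$. Since $V$ is orthogonal, $\|W\|_2^2=\|u\|_2^2=\sum_l u_l^2$, and independence of the $u_l$ turns the expectation into a product:
\[
\mathrm{E}\,e^{-\|W\|_2^2/h_0^2}=\prod_{l=1}^d \mathrm{E}\,e^{-u_l^2/h_0^2}.
\]

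It then remains to compute the scalar integral $\mathrm{E}\,e^{-u^2/h_0^2}$ for $u\sim\mathcal N(a,\lambda)$. This is the only real calculation and the place where I would be careful: completing the square in the exponent $-u^2/h_0^2-(u-a)^2/(2\lambda)$ and performing the resulting Gaussian integral yields
\[
\mathrm{E}\,e^{-u^2/h_0^2}=\left(1+2\lambda/h_0^2\right)^{-1/2}\exp\left(-\frac{a^2}{h_0^2+2\lambda}\right).
\]

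Finally I would substitute. In the same-cluster case $m=0$ kills every exponential factor and the eigenvalues of $2\Sigma_k$ are $2\sigma_{k,l}^2$, so each factor becomes $(1+4(\sigma_{k,l}/h_0)^2)^{-1/2}$, giving the first formula. In the distinct-cluster case the eigenpairs $(\sigma_{k,k',l}^2,v_{k,k',l})$ of $\Sigma_k+\Sigma_{k'}$ give $\lambda_l=\sigma_{k,k',l}^2$ and $a_l=\langle \mu_k-\mu_{k'},v_{k,k',l}\rangle$, and substituting into the displayed scalar formula reproduces the second product exactly. The main obstacle is purely the bookkeeping in the completing-the-square step — matching the prefactor $(1+2\lambda/h_0^2)^{-1/2}$ and the exponent $-a^2/(h_0^2+2\lambda)$ — rather than any conceptual difficulty.
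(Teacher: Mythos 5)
Your proof is correct and takes essentially the same route as the paper: both reduce the claim to the observation that $x_i-x_j$ is Gaussian (with covariance $2\Sigma_k$ or $\Sigma_k+\Sigma_{k'}$) and then evaluate the Laplace transform of its squared norm --- the paper's lemma on the noncentral $\chi^2$ Laplace transform, specialized to $t=-1/h_0^2$, is exactly your scalar formula $\left(1+2\lambda/h_0^2\right)^{-1/2}\exp\left(-a^2/(h_0^2+2\lambda)\right)$. If anything, your write-up is more complete, since the paper states that lemma without proof whereas you derive it by diagonalizing, using independence of the rotated coordinates, and completing the square.
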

	\begin{proof}
		The proof of the proposition relies on the fact that $X_i-X_j$ is a Gaussian random vector with mean $\mu_{k_i}-\mu_{k_j}$ and variance 
		$\Sigma_{k_i}+\Sigma_{k_i}$ so that the distribution of $\|X_i-X_j\|_2^2$ is related to the noncentral $\chi^2$ distribution with $p$ degrees of freedom.
		The next Lemma provides the Laplace transform of the noncentral $\chi^2$ distribution. 
		\begin{lemma}
			Let $X\sim \mathcal{N}(\mu,\Sigma)$. If $\mu\neq 0$, we have 
			\[
			\mathrm{E}\left[e^{t \|X\|^2}\right] =\prod_{d=1}^p \left(1-2t\sigma^2_d\right)^{-1/2},\quad t\leq 0,
			\]
			with $\sigma^2_1,\ldots,\sigma^2_p$ the eigenvalues of $\Sigma$. More generally, for $\mu\neq 0$,
			\[
			\mathrm{E}\left[e^{t\|X\|^2}\right] =\prod_{d=1}^p \exp\left(\frac{\langle \mu,v_d \rangle^2t}{1-2t\sigma_d^2} \right) \left(1-2t\sigma_d^2 \right)^{-1/2} 
			\]
			with $\sigma^2_1,\ldots,\sigma^2_p$ the eigenvalues of $\Sigma$ and $v_1,\ldots,v_p$ the associated eigenvectors.
		\end{lemma}
	\end{proof}
	
	\begin{remark}
		In dimension $p=1$, we obtain the very simple formula
		\[
		\bar A_{i,j}=\exp\left(-\frac{(\mu_k-\mu_{k'})^2}{h_0^2+2(\sigma_{k}^2+\sigma_{k'}^2)} \right) \left(1+2(\sigma_{k}^2
		+\sigma_{k'}^2/h_0^2) \right)^{-1/2}. 
		\]
		In the case of $K=2$ clusters, the condition $p>q$ writes
		\[
		(\mu_2-\mu_{1})^2>\frac{1}{2}\left(h_0^2+2(\sigma_{1}^2+\sigma_{2}^2)\right)\max_{k=1,2}\log\frac{\left(1+4\sigma_{k}^2/h_0^2 \right)}{\left(1+2(\sigma_{1}^2+\sigma_{2}^2)/h_0^2 \right)}.
		\]
		When $\sigma_1=\sigma_2$, we simply need $\mu_2\neq\mu_{1}$.
	\end{remark}

	\section{Computing a solution of the SDP relaxation}
	\label{sec:solving}
	
	\subsection{The algorithm}
	\subsubsection{Further description of the constraints}
	
	The constraints that every diagonal element of $Z$ should be equal to $1$ can be written as  
	\begin{equation}
		\trace \left( C_i Z \right) = 1, \quad C_i= e_i e_i^T, \quad i=1, \ldots n. \label{eq:diag}
	\end{equation}
	The $\displaystyle\sum_{i, j=1}^n Z_{i,j} = \lambda$ constraint can be expressed as the rank-$1$ constraint
	\begin{equation}
		\trace \left( D Z \right) = \lambda , \label{eq:lambda}
	\end{equation}
	where $D$ is the all-ones matrix of size $n \times n$.
	
	\subsubsection{Helmberg and Rendl's spectral formulation}
	
	One of the nice features of the Semi-Definite Program \eqref{SDP}-\eqref{Mopt} is that it can be rewritten as an eigenvalue optimization problem. Let us adopt a Lagrangian duality approach to this problem. As in \cite{helmberg2000spectral}, we will impose the 
	reduntant constraint that the trace is constant. The Lagrange function is given by 
	\begin{align*}
		L(Z,z) & = \langle A,Z\rangle + \sum_{i=1}^n z_i \ \left(\langle C_i,Z \rangle-1\right)+
		z_{n+1} \left(\langle D,Z \rangle-\lambda\right) \\
		& = \left\langle A+ \sum_{i=1}^n z_i C_i+z_{n+1} D,Z \right\rangle -\sum_{i=1}^n z_i-\lambda z_{n+1}.
	\end{align*}
	Now the dual function is given by 
	\begin{align}
		\theta (z) & = \max_{\stackrel{Z \succeq 0}{{\rm trace}(Z)=n}} \quad L(Z,z). 
		\label{dual}
	\end{align}
	Therefore, we easily get that 
	\begin{align*}
		\theta (z) & = n \ \lambda_{\max} \left(A+ \sum_{i=1}^n z_iC_i+z_{n+1} D \right) -\sum_{i=1}^n z_i-\lambda z_{n+1}
	\end{align*}
	where $\lambda_{\max}$ is the maximum eigenvalue function. 
	Therefore, the solution to this problem is amenable to eigenvalue optimization. An important remark is that 
	a maximizer $Z^*$ in \eqref{dual} associated to a dual minimizer $z^*$ will be a solution of the original problem. It can be written as 
	\begin{align*}
		Z^* & = V^*V^{*^t}
	\end{align*}
	where $V^*$ is a matrix whose columns form a basis of the eigenspace associated with $\lambda_{\max} \left(A+ \sum_{i=1}^n z_iC_i+z_{n+1} D \right)$.
	
	\subsubsection{Using the HANSO algorithm}
	
	The maximum eigenvalue is a convex function. Let $\mathcal A$ denote the affine operator 
	\begin{align*}
		\mathcal A (z) & = A+ \sum_{i=1}^n z_iC_i+z_{n+1} D. 
	\end{align*} 
	The subdifferential of $\lambda_{\max} (\mathcal A(z))$ is given by 
	\begin{align*}
		\partial \lambda_{\max}(\mathcal A(z)) & = \mathcal A^* \left( V \mathcal Z V^t\right) 
	\end{align*}
	where $V$ is a matrix whose columns form a basis for the eigenspace associated to the maximum eigenvalue of $\mathcal A(z)$ and  
	\begin{align*} 
		\mathcal Z & = \left\{ Z \in \mathbb R^{r_{\max} \times r_{\max}} \mid Z\succeq 0 
		\textrm{ and } \textrm{trace} \left(Z\right)=1 \right\}
	\end{align*} 
	where $r_{\max}$ is the multiplicity of this maximum eigenvalue. 
	
	With these informations in hand, it is easy to minimize the dual function $\theta$. Indeed, 
	the subdifferential of $\theta$ at $z$ is simply given by 
	\begin{align*}
		\partial \theta (z) & = \mathcal A^* \left(V \mathcal Z V^t\right)-
		\left\{
		\left[ 
		\begin{array}{c}
			e \\
			\lambda 
		\end{array} 
		\right]
		\right\}.  
	\end{align*}
	The algorithm HANSO \cite{overton2009hanso} can then be used to perform the 
	actual minimization of the dual function $\theta$. A primal solution 
	can then be recovered as a maximizer in the definition 
	\eqref{dual} of the dual function. 
	
	\subsubsection{Computing the actual clustering}
	As advised in \cite{vu2011singular}, the actual clustering can be computed using a minimum spanning tree method and removing the largest edges. 
	
	\subsubsection{Choosing $\lambda$}
	Once the clusters have been identified, it is quite easy to identify the underlying Gaussian Mixture Distribution. 
	The choice of $\lambda$ can then be performed using 
	standard model selection criteria such as AIC \cite{akaike1974new}, BIC \cite{schwarz1978estimating} or ICL \cite{biernacki2000assessing}.  
	
	\section{Simulation results}
	
	In all the experiments, the parameter $h_0$ in \eqref{eq:gaussaffinity} was chosen as $$h_0=.5*\max(\textrm{diag}(X^t*X))^{1/2}.$$

	\subsection{Some interesting examples}
	
	In this section, we start with three examples where the separation properties of the Guedon-Vershynin embedding are nicely illustrated. In all three instances, the data were generated using two 2- dimensional Gaussians samples with equal size (100 samples by cluster). These examples are shown in Figure \ref{ex1}, Figure \ref{ex2} and Figure \ref{ex3} below. All these example seem to be very difficult to address for methods with guaranteed polynomial time convergence. In each case,  one observes that the clusters are well separated after the embedding and that they do not 
	look like Gaussian samples anymore. The fact of not being Gaussian does not impair the success of methods such as minimum spanning trees although such methods might work better with the example in Figure \ref{ex3} than in the example of Figure \ref{ex1} and Figure \ref{ex2}.  
	
	\begin{figure}[H]
		\centering
		\includegraphics[scale=.5]{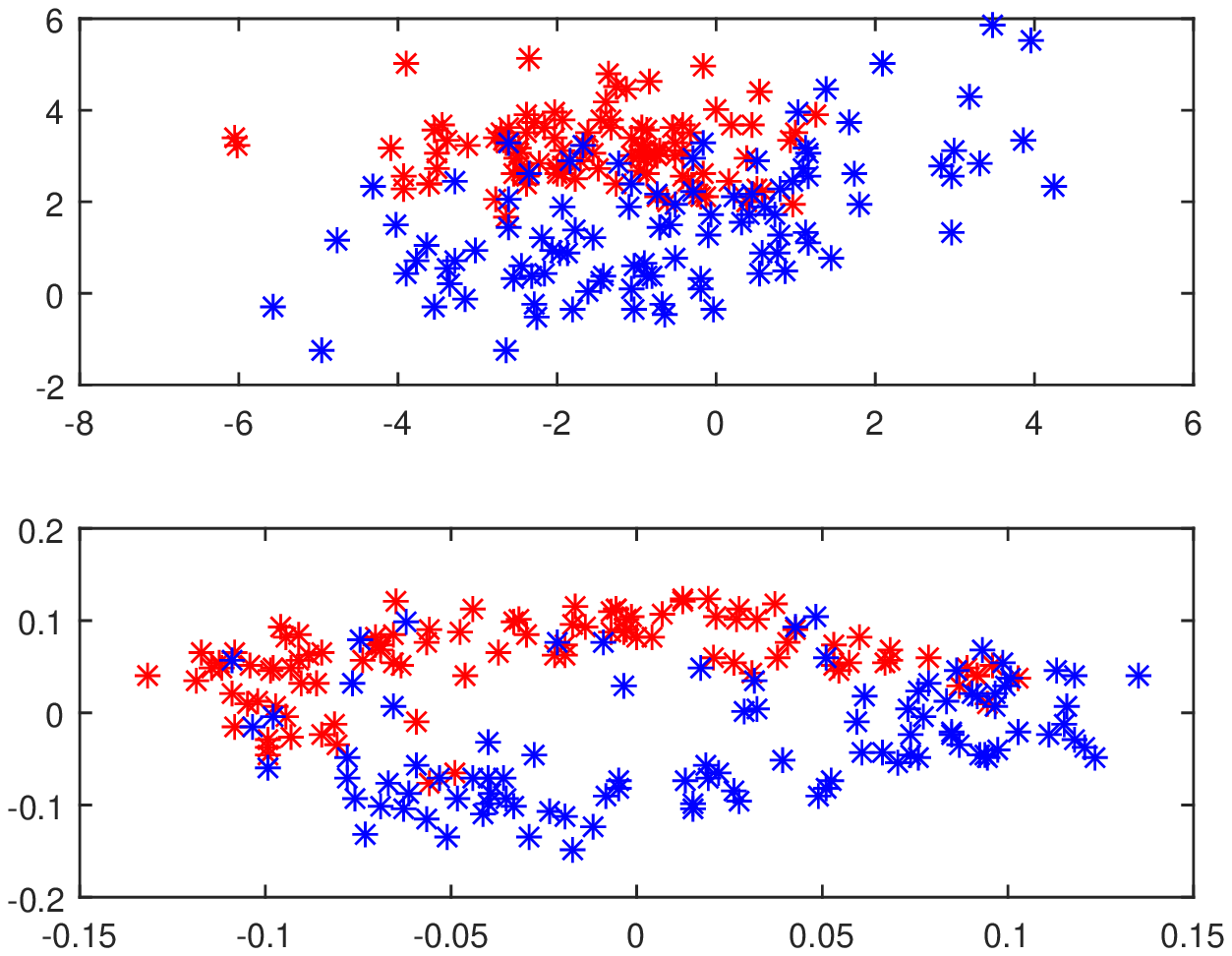}
		\caption{Example 1}
		\label{ex1}
	\end{figure}
	
	\begin{figure}[H]
		\centering
		\includegraphics[scale=.5]{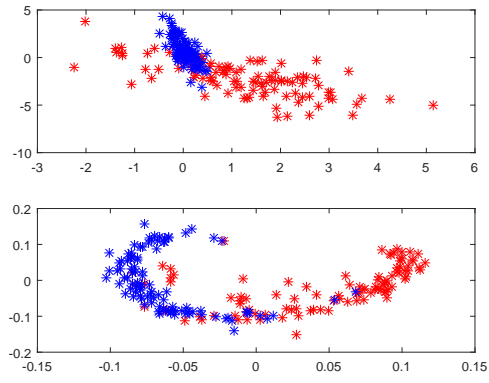}
		\caption{Example 2}
		\label{ex2}
	\end{figure}
	
	\begin{figure}[H]
		\centering
		\includegraphics[scale=.5]{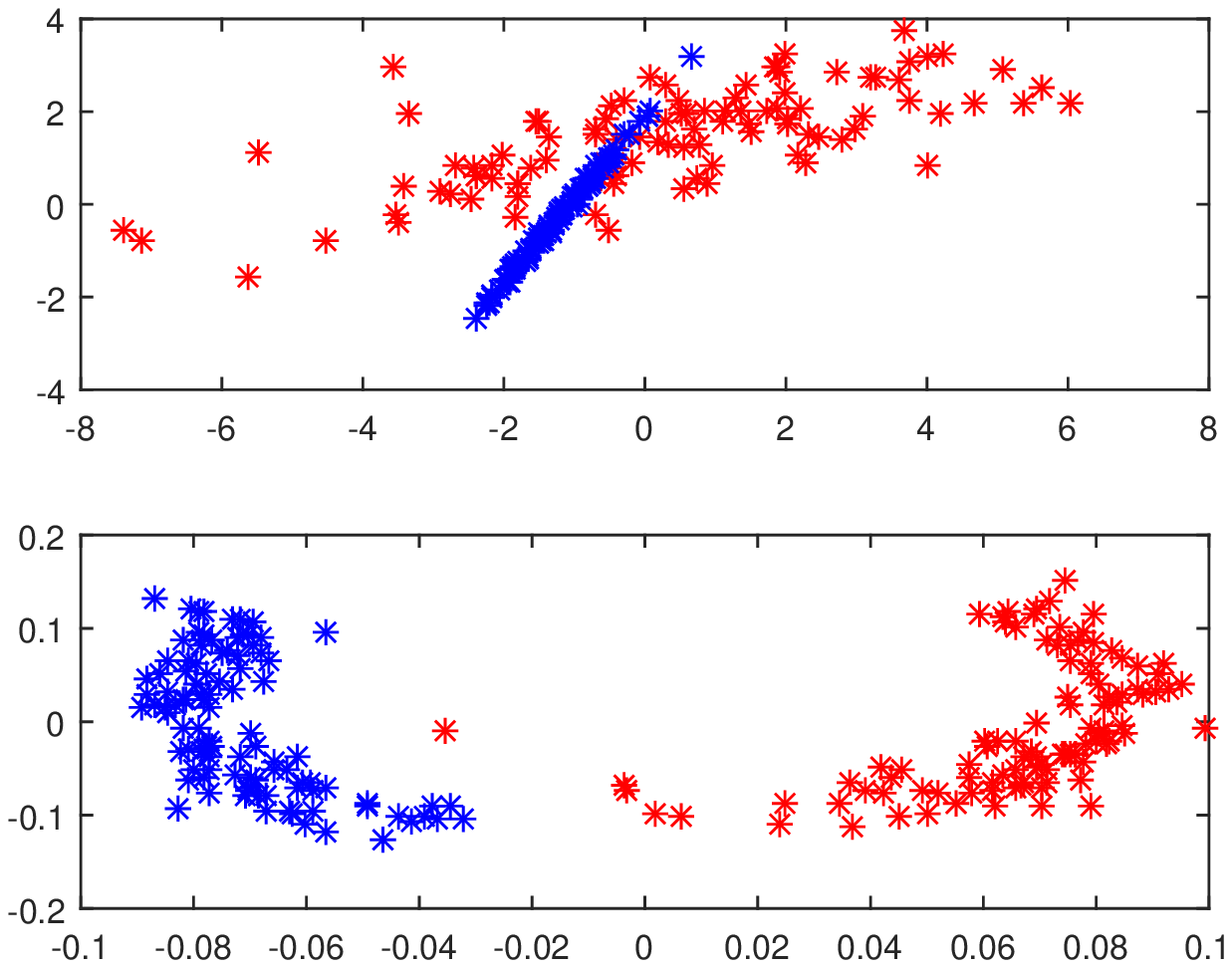}
		\caption{Example 3}
		\label{ex3}
	\end{figure}

	\subsection{Comparison with standard embeddings on a 3D cluster example} 
	
	Simulations have been conducted to assess the quality of the proposed embedding. In this subsection, we used the Matlab package drtoolbox https://lvdmaaten.github.io/drtoolbox/ proposed by 
	Laurens Van Maatten on a sample drawn from a 10 dimensional Gaussian Mixture Model with 4 components and equal proportions. In Figure \ref{GV}, we show the original affinity matrix together with the estimated cluster matrix. In Figure \ref{fig:1}, we compare the affinity matrix of data with the affinity matrix of the mapped data using various embeddings proposed in the drtoolbox package. This toy experiment shows that the embedding described in this paper can cluster as the same time as is embeds into a small dimensional subspace. This is not very surprising since our embedding is taylored for the joint clustering-dimensionality reduction purpose whereas most of the known existing embedding methods aren't. Given the fact that clustered data are ubiquitous in real world data analysis due to the omnipresence of stratified populations, taking the clustering purpose into account might be a considerable advantage.    
	
	\begin{figure}[!t]
		\centering
		\includegraphics[scale=.5]{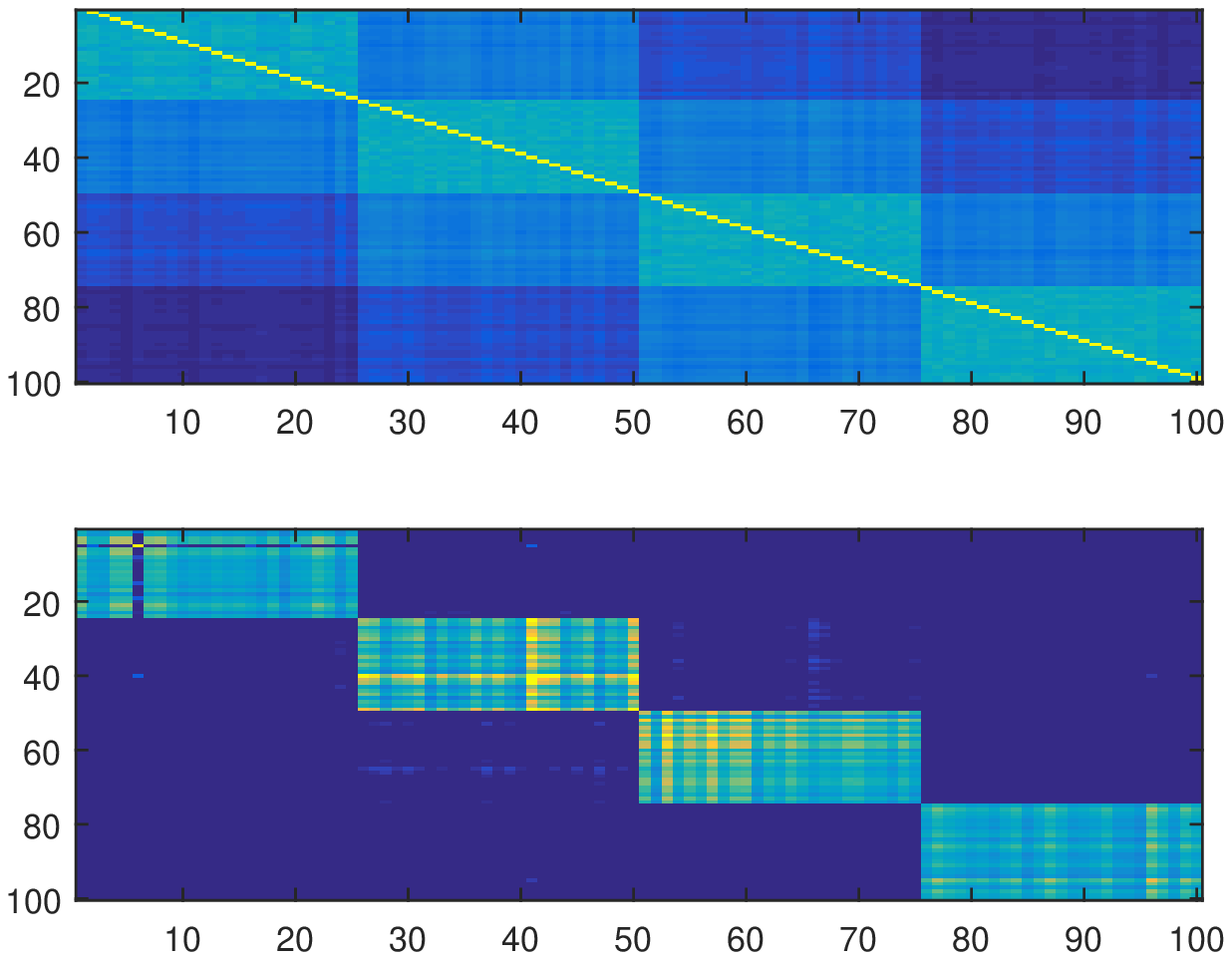}
		\caption{Original affinity matrix vs. Guedon Vershynin Cluster matrix}
		\label{GV}
	\end{figure}

	\begin{figure}[htb]
		\centering
		\subfloat[Original affinity matrix vs. affinity matrix after PCA embedding]{%
			\includegraphics[width=.24\textwidth]{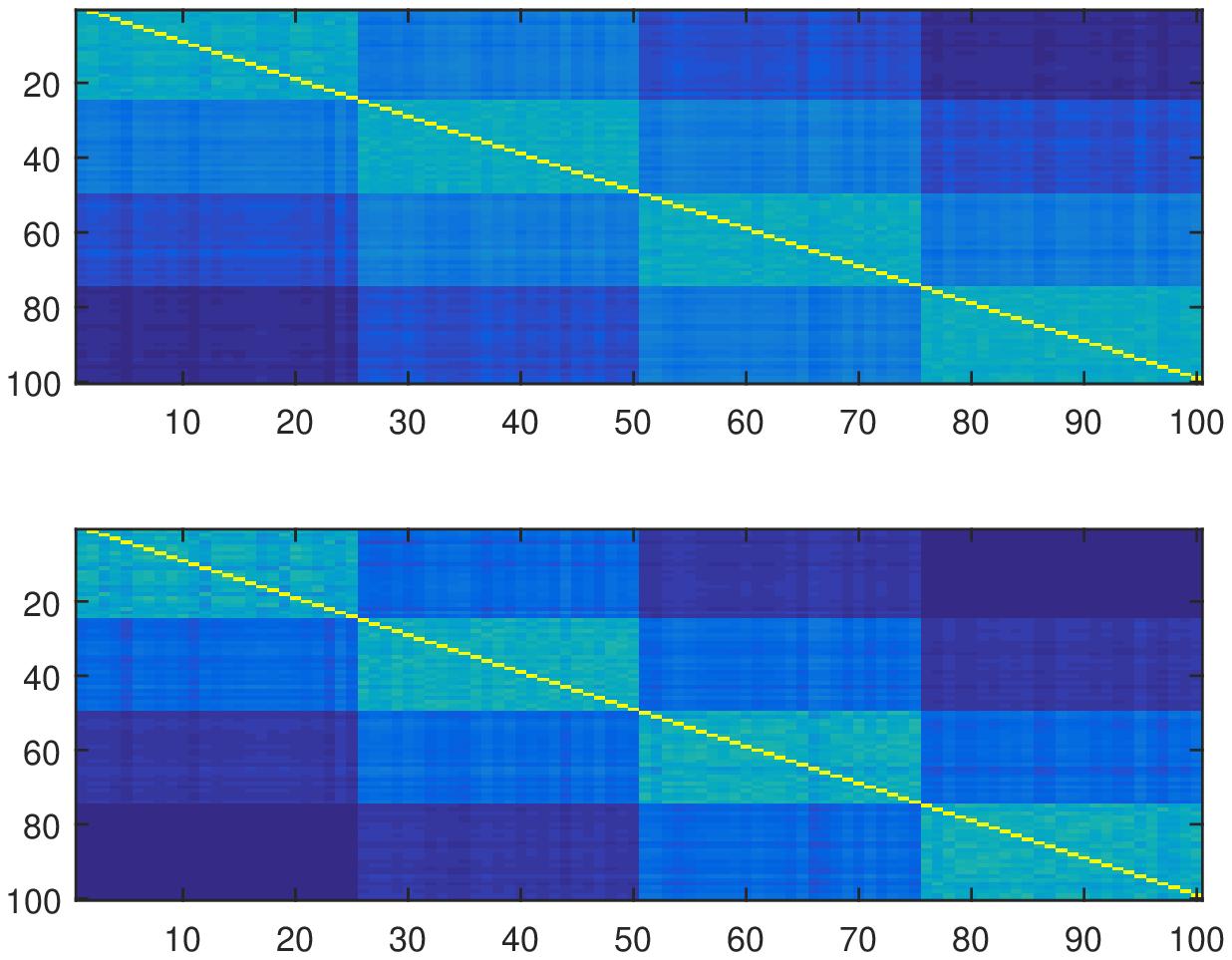}}\hfill
		\subfloat[Original affinity matrix vs. affinity matrix after MDS embedding]{%
			\includegraphics[width=.24\textwidth]{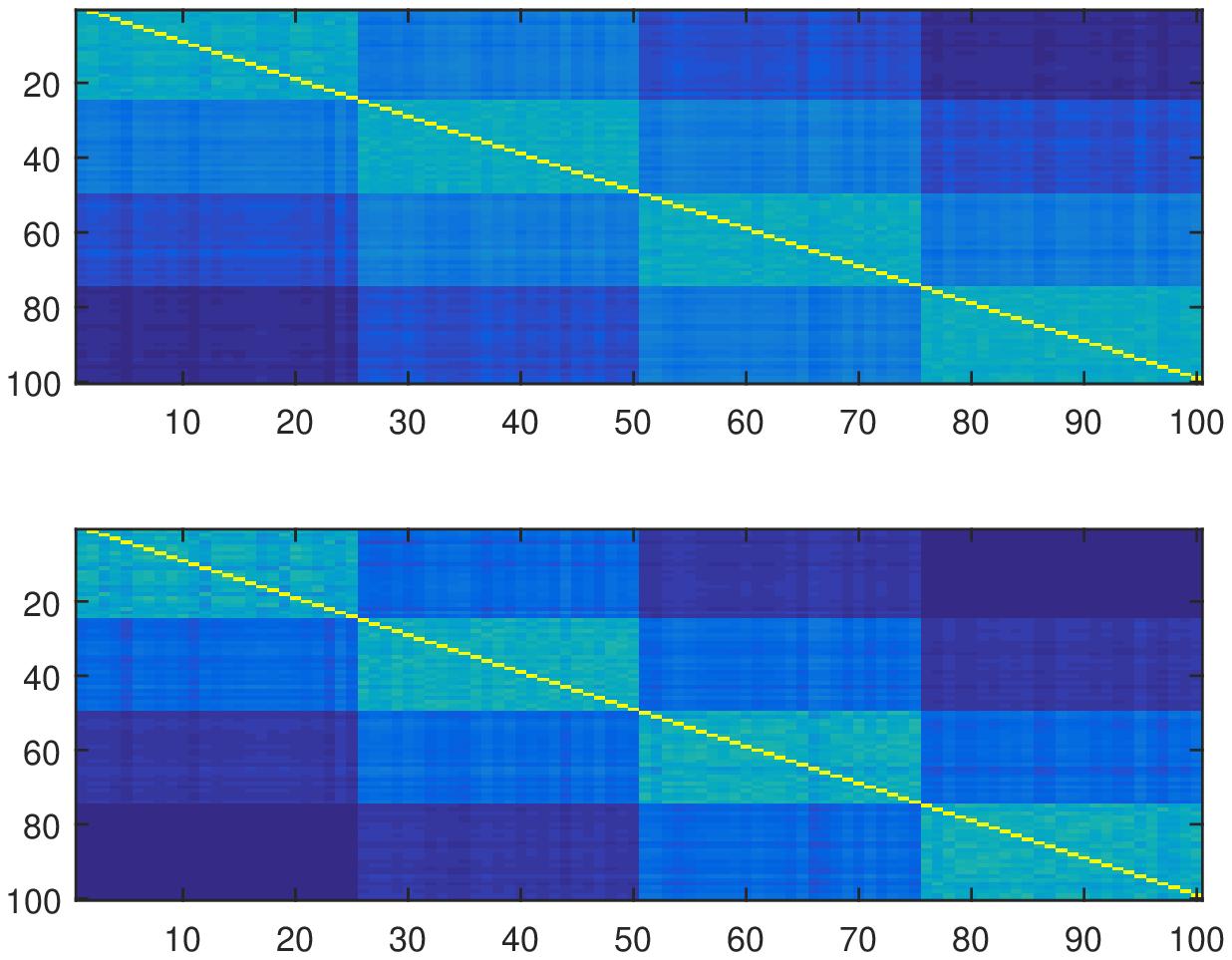}}\hfill
		\subfloat[Original affinity matrix vs. affinity matrix after Factor Analysis embedding]{%
			\includegraphics[width=.24\textwidth]{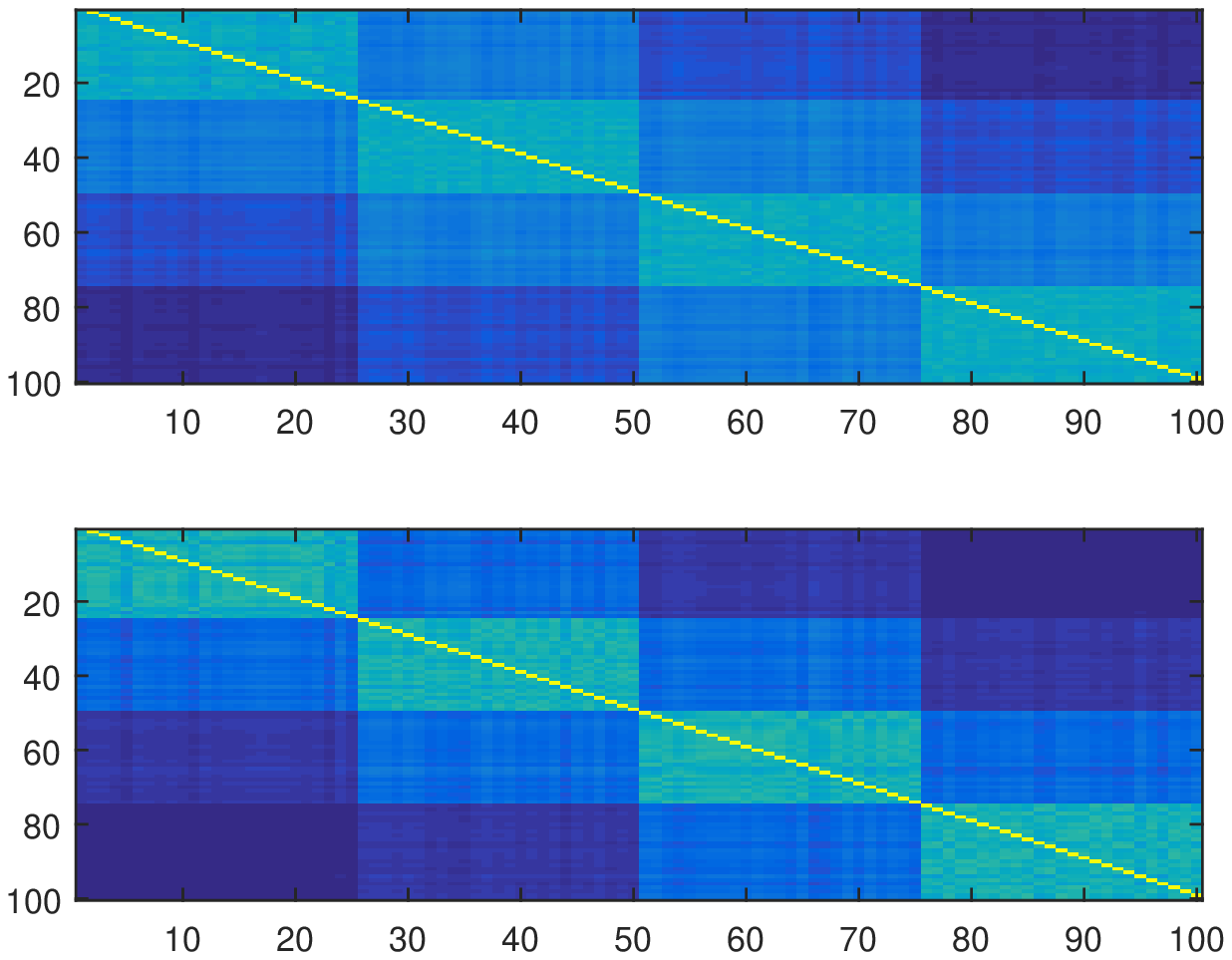}}\hfill
		\subfloat[Original affinity matrix vs. affinity matrix after t-SNE
		embedding]{%
			\includegraphics[width=.24\textwidth]{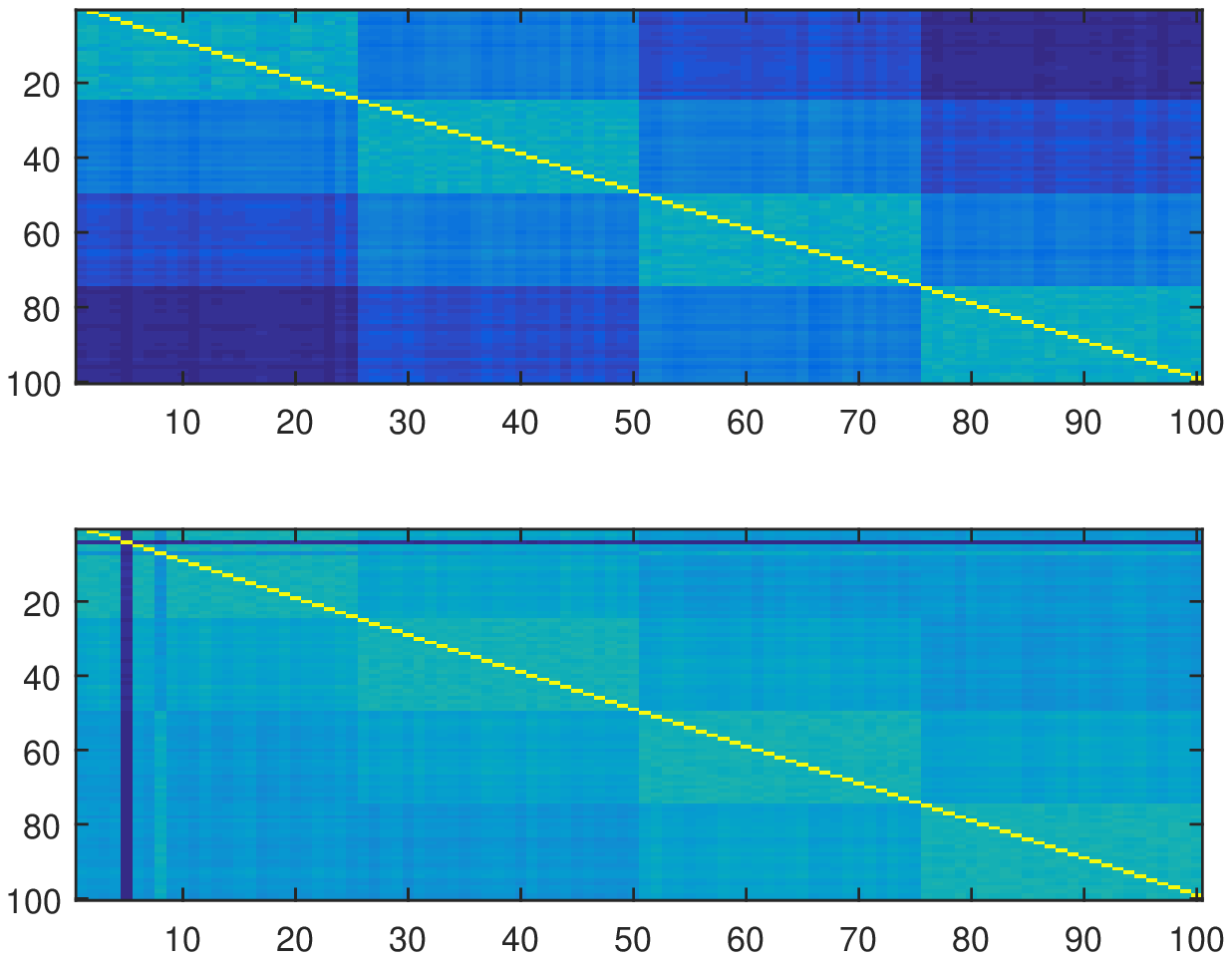}} \\
		\subfloat[Original affinity matrix vs. affinity matrix after Sammon
		embedding]{%
			\includegraphics[width=.24\textwidth]{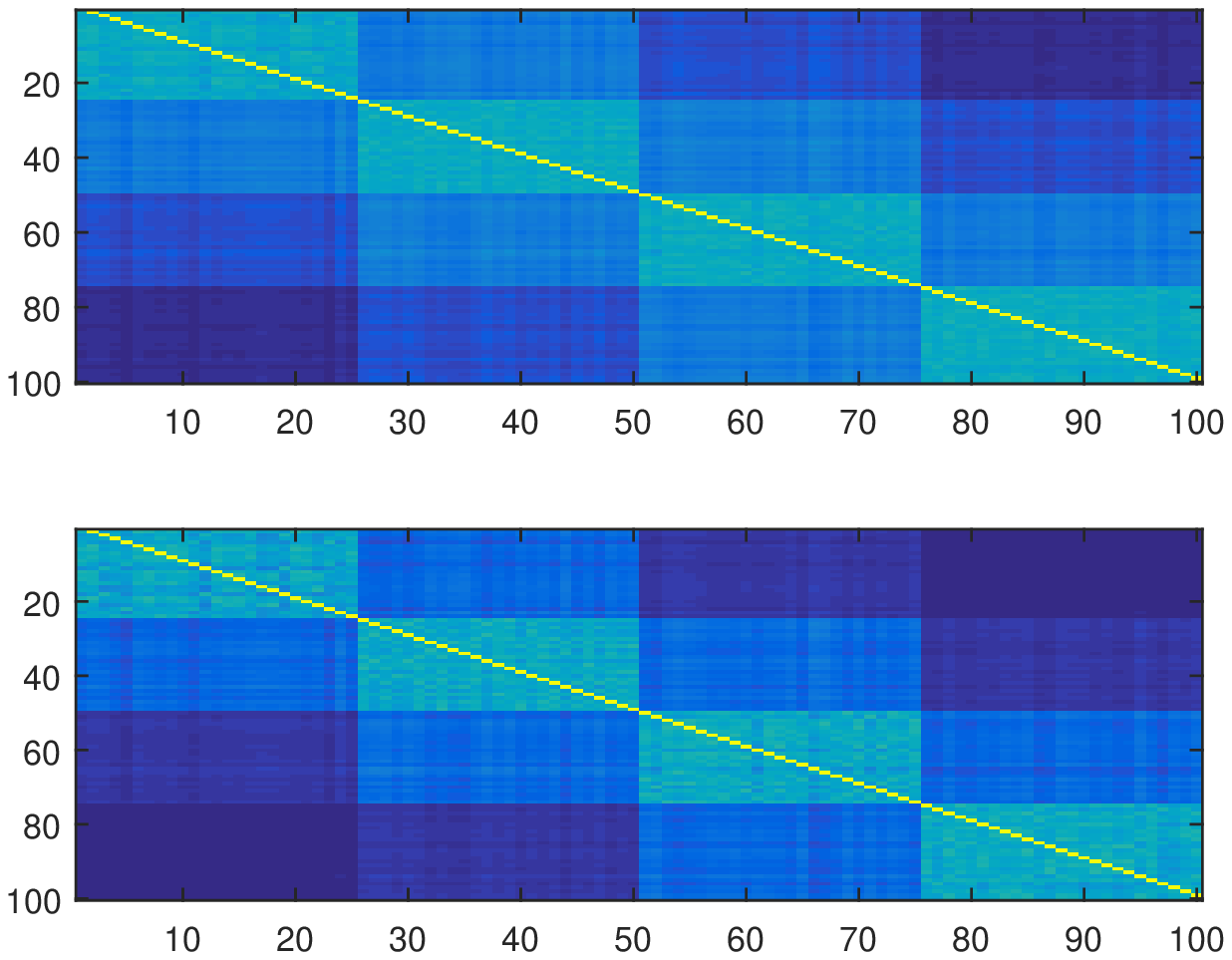}}\hfill
		\subfloat[Original affinity matrix vs. affinity matrix after LLE
		embedding]{%
			\includegraphics[width=.24\textwidth]{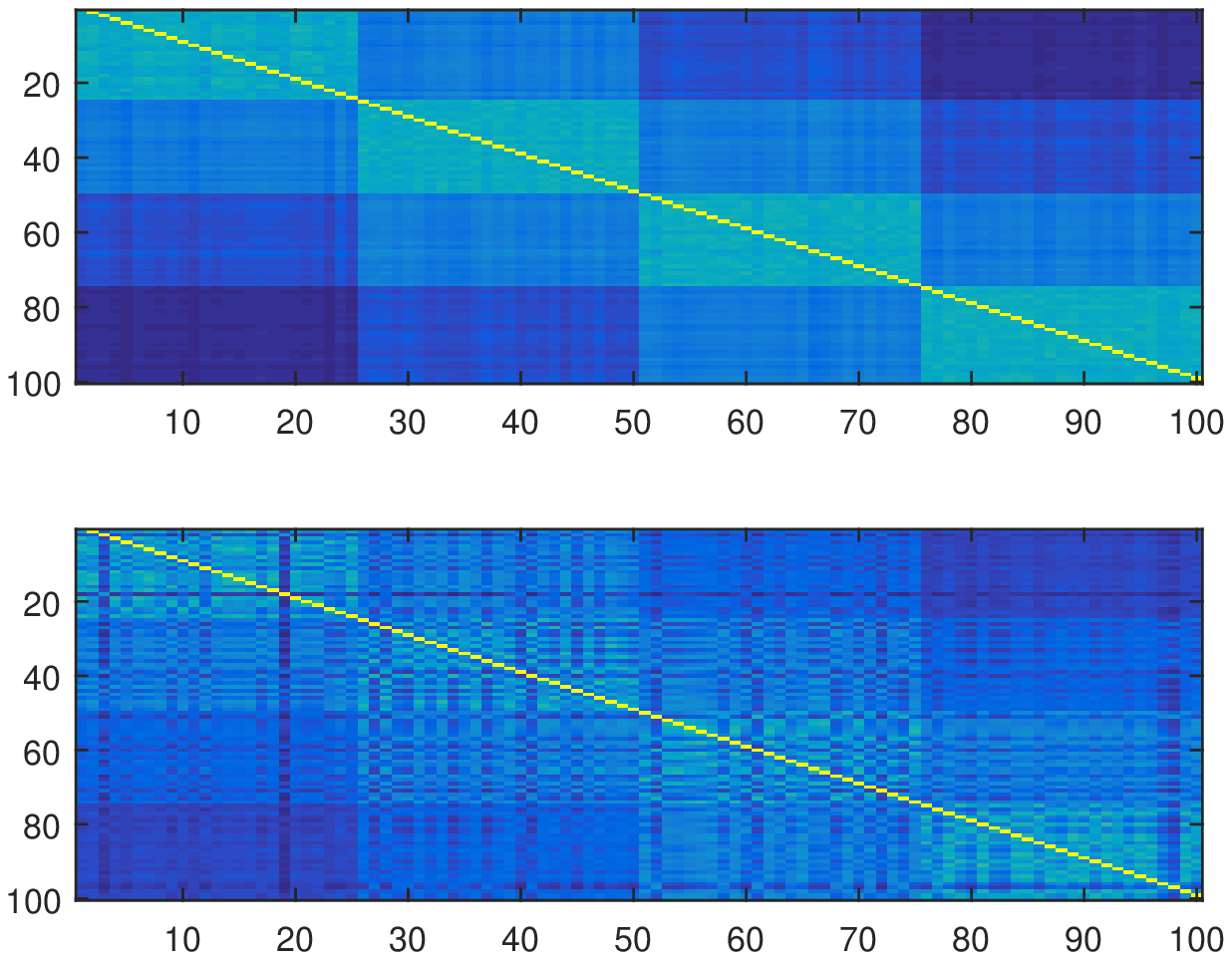}}\hfill
		\subfloat[Original affinity matrix vs. affinity matrix after Laplacian embedding]{%
			\includegraphics[width=.24\textwidth]{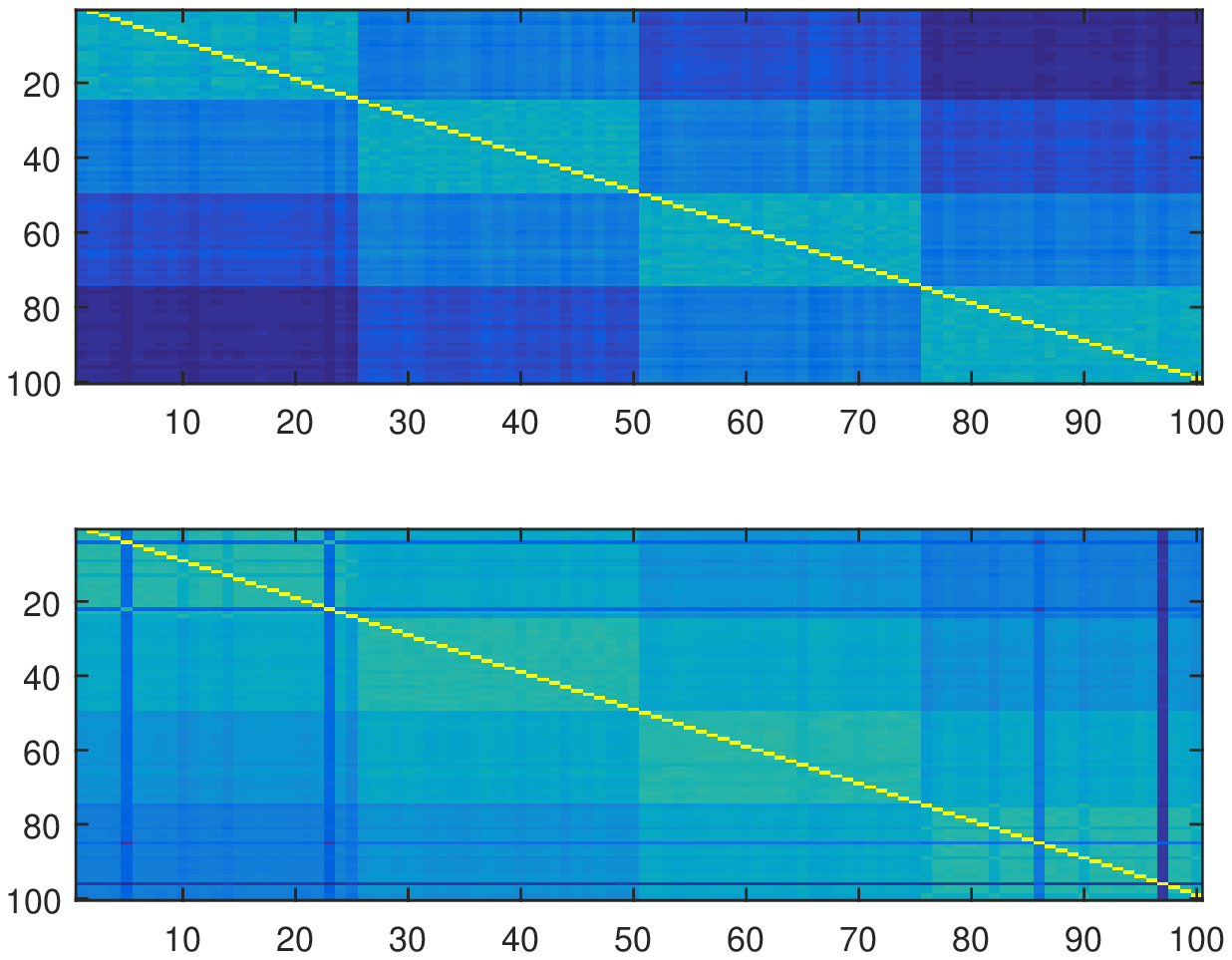}}\hfill
		\subfloat[Original affinity matrix vs. affinity matrix after Kernel-PCA embedding]{%
			\includegraphics[width=.24\textwidth]{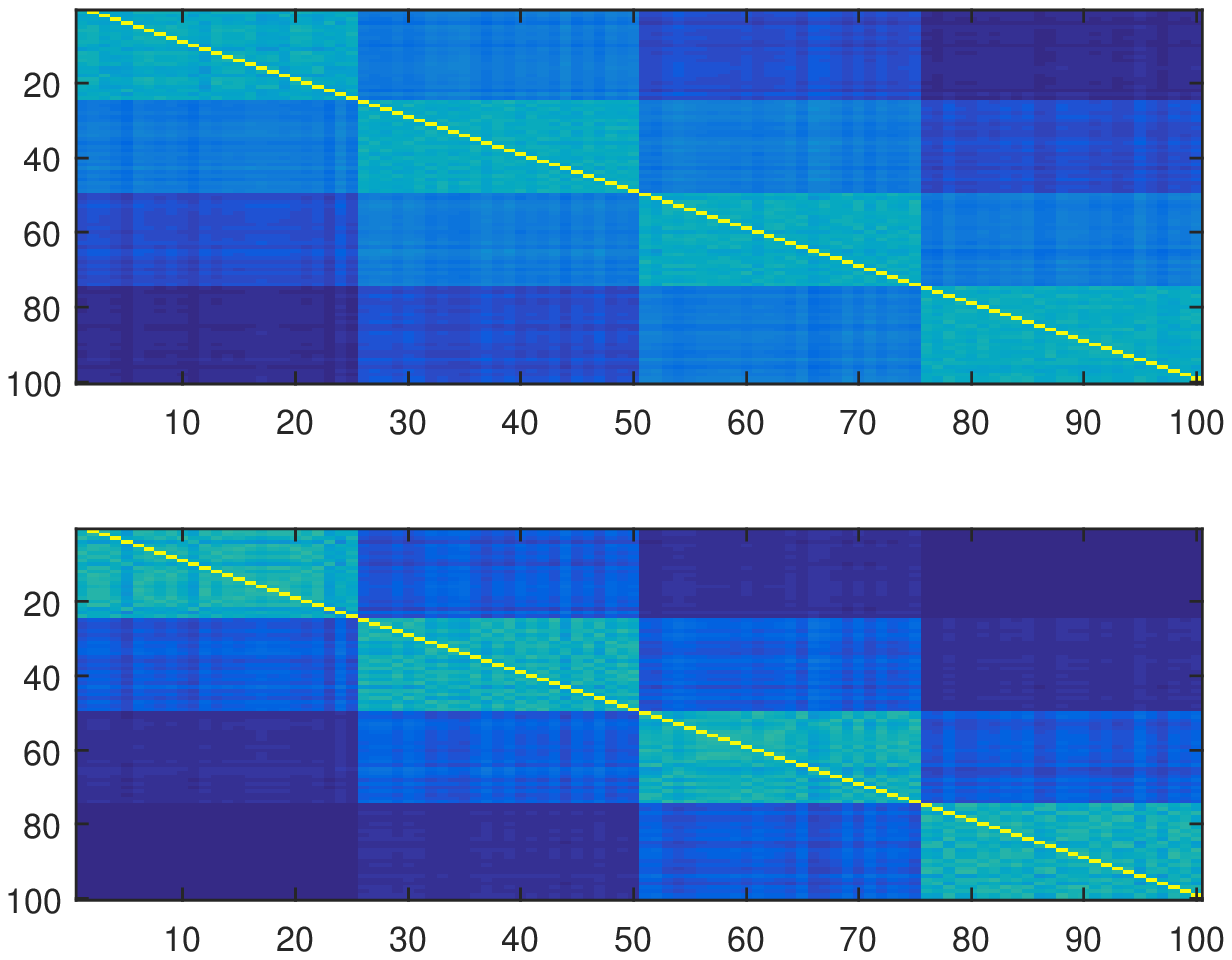}}\\
		\subfloat[Original affinity matrix vs. affinity matrix after LTSA
		embedding]{%
			\includegraphics[width=.24\textwidth]{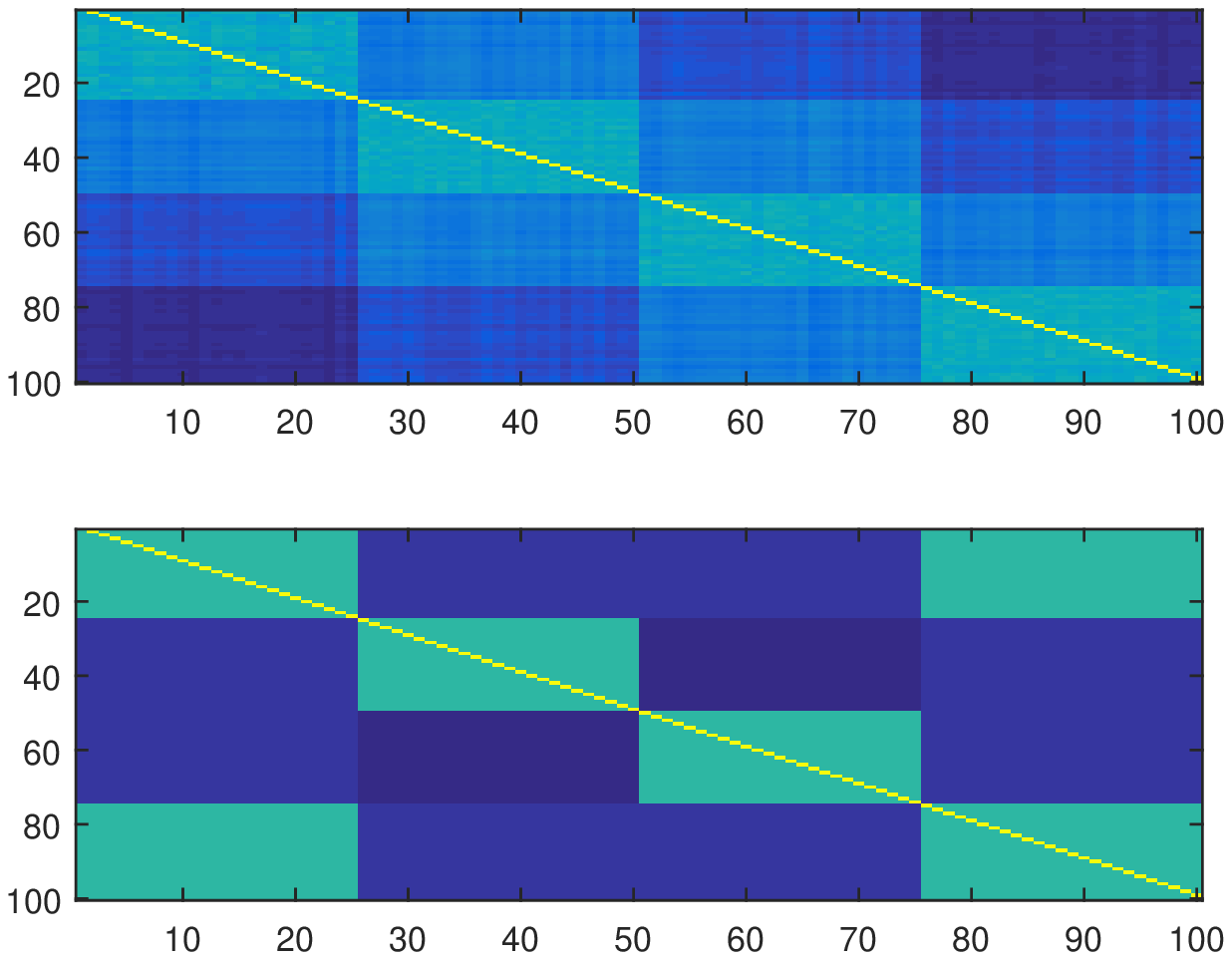}}\hfill
		\subfloat[Original affinity matrix vs. affinity matrix after MVU
		embedding]{%
			\includegraphics[width=.24\textwidth]{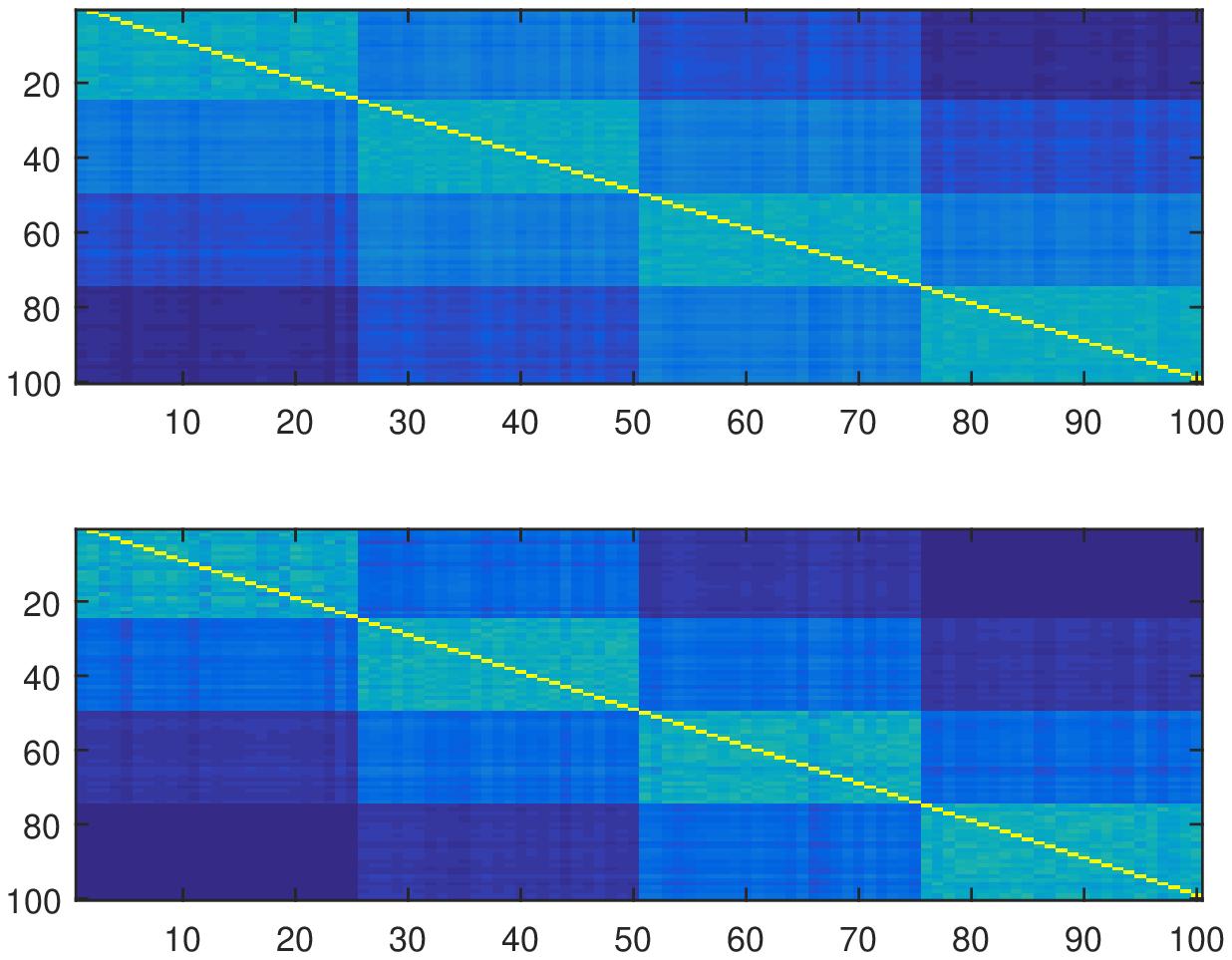}}\hfill
		\subfloat[Original affinity matrix vs. affinity matrix after Auto Encoder embedding]{%
			\includegraphics[width=.24\textwidth]{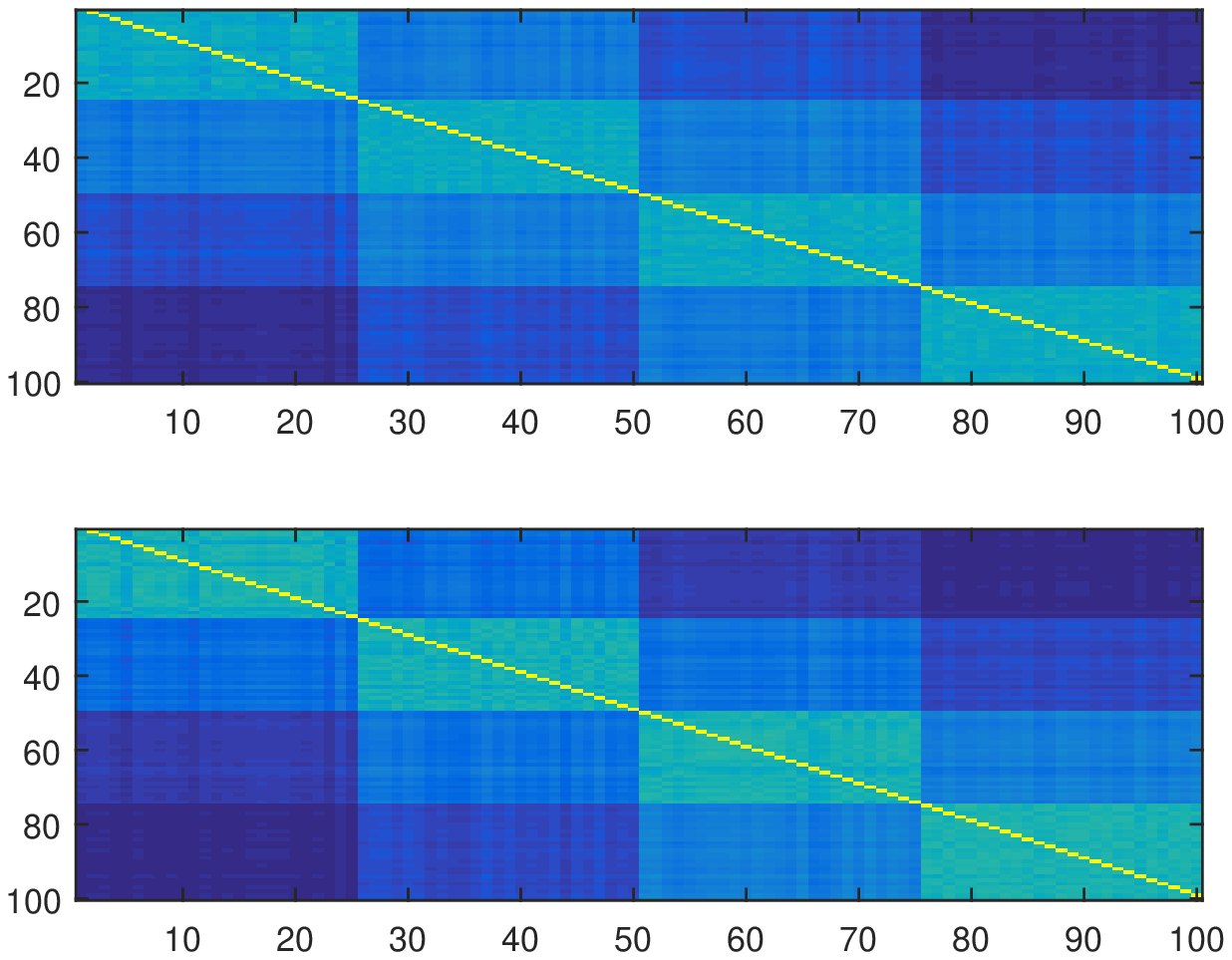}}\hfill
		\subfloat[Original affinity matrix vs. affinity matrix after DiffusionMap
		embedding]{%
			\includegraphics[width=.24\textwidth]{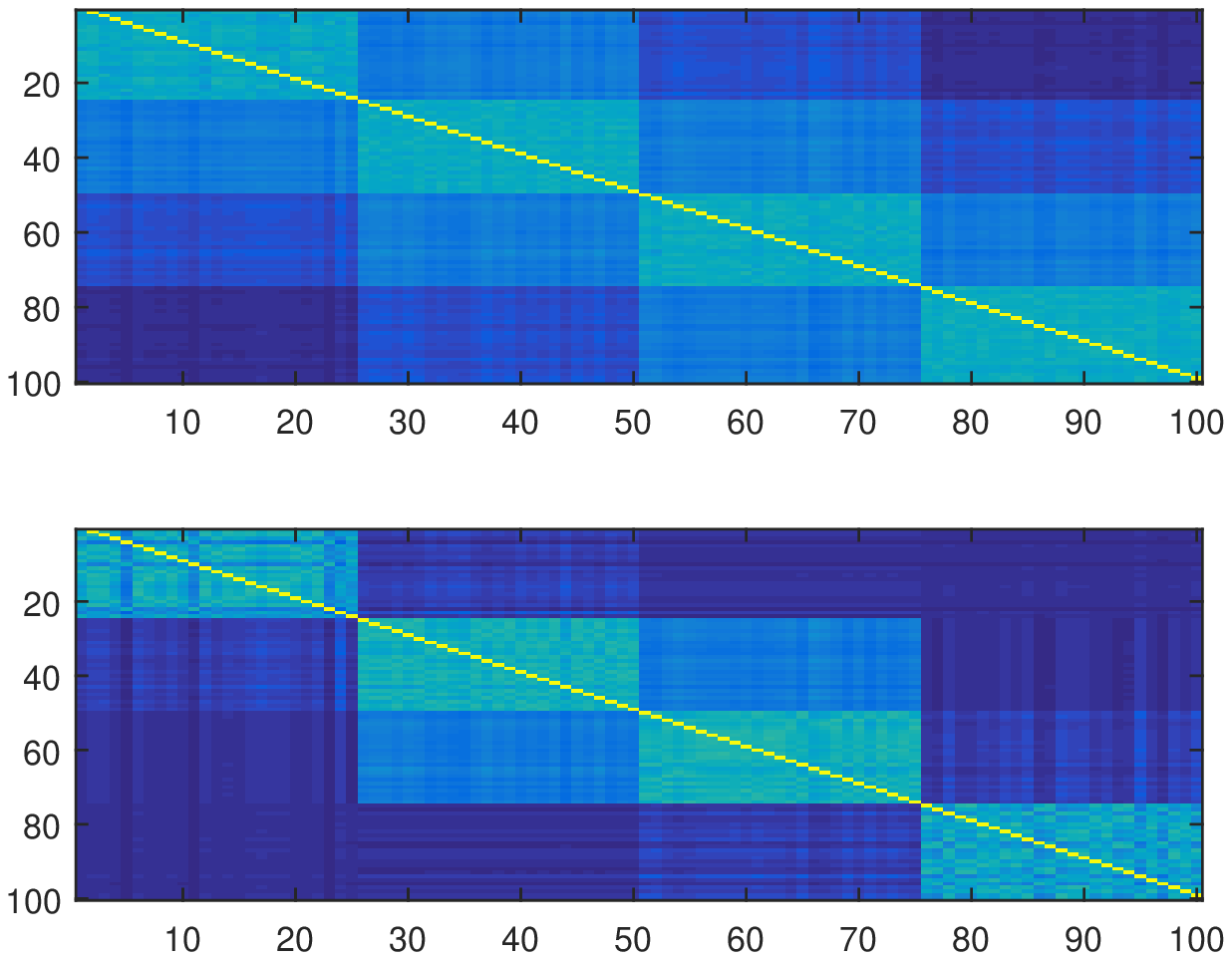}}\\
		
		\caption{The affinity matrix obtained after embedding using different methods from the Matlab package drtoolbox https://lvdmaaten.github.io/drtoolbox/}\label{fig:1}
	\end{figure}

	\subsection{The sparsity of the solution}
	
	In this experiment, we computed the relative sparsity of the affinity matrix of the embedded data v.s. the sparsity of the affinity matrix of the original data.
	
	Since the matrix is not exactly sparse, we chose to study the $l^p$ quasi-norm instead of the exact sparsity
	for $p$ small, i.e. $p=.05$. 
	
	We made 250 Monte Carlo experiments in dimension 10, 30, 50, 70 and 90. The histogram of the relative error of the $l^p$ quasi-norm for each dimension is given in Figure \ref{histolp}. In this experiment, we draw the sample from two Gaussian distributions with mean drawn from 
	$\mathcal N(0,2I)$ and covariance drawn as $A^A$ where the components of $A$ are i.i.d. 
	$\mathcal N(0,1)$. The affinity matrix of the orignal data is already quite sparse but the embedding improves the sparsity by 10 to nearly 20 percent as the dimension increases.

	\begin{figure}[htb]
		\centering
		\subfloat[$p=.05$ dimens. =50]{%
			\includegraphics[width=.30\textwidth]{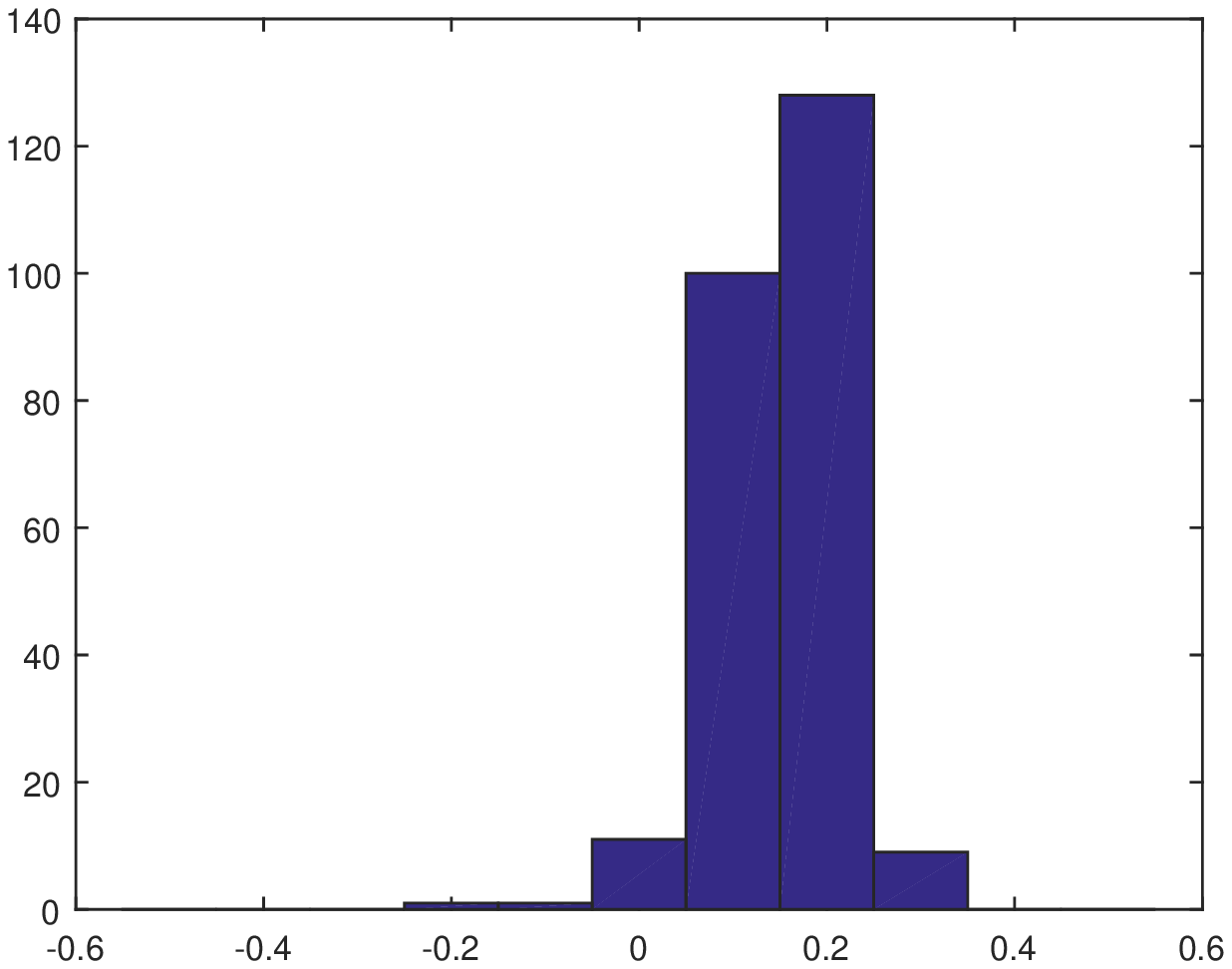}}\hfill
		\subfloat[$p=.05$  dimens. =100]{%
			\includegraphics[width=.30\textwidth]{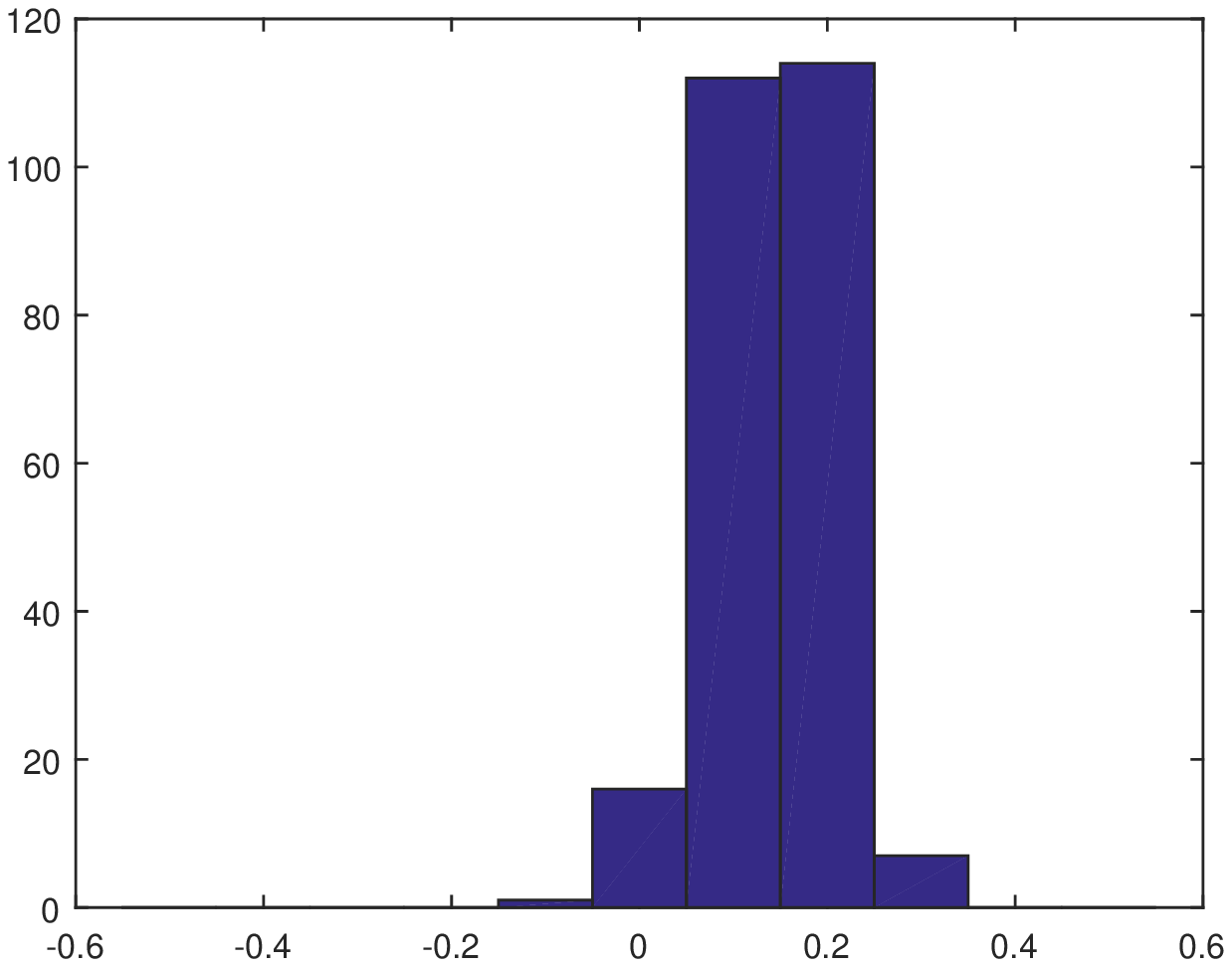}}\hfill
		\subfloat[$p=.05$ in dimens. 150]{%
			\includegraphics[width=.30\textwidth]{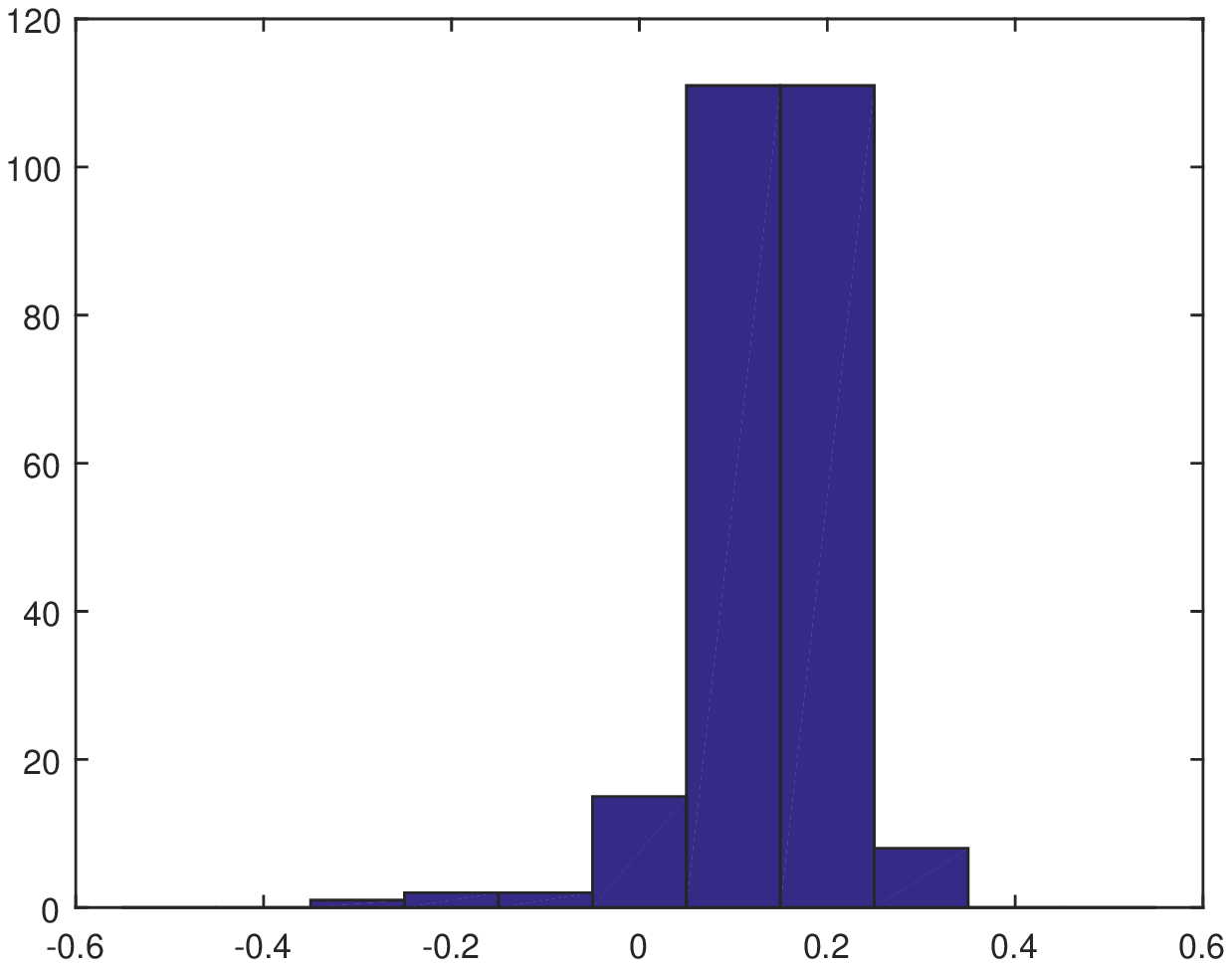}}\\
		\subfloat[$p=.05$ dimens. =200]{%
			\includegraphics[width=.30\textwidth]{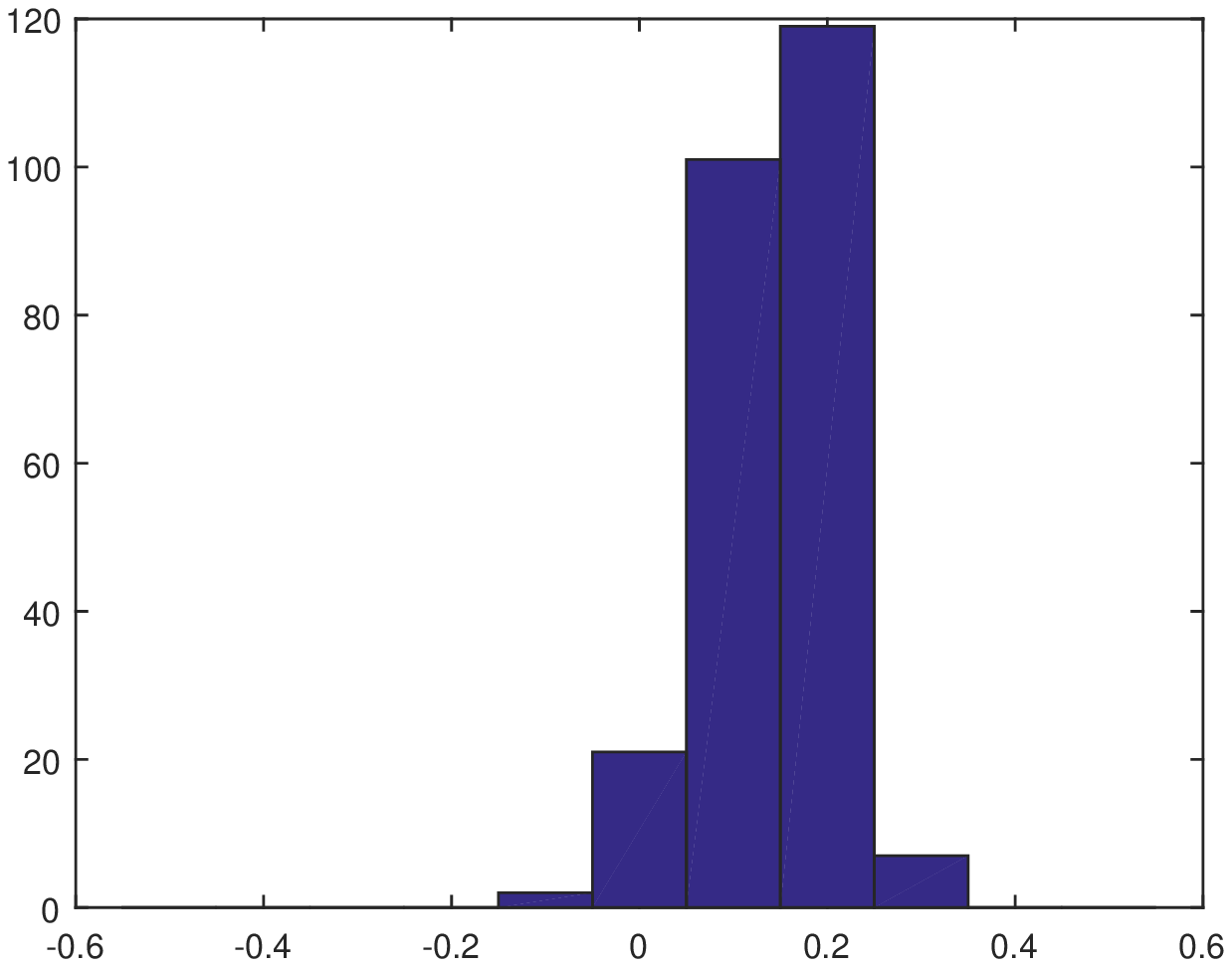}} \hfill
		\subfloat[$p=.05$ dimens.= 250]{%
			\includegraphics[width=.30\textwidth]{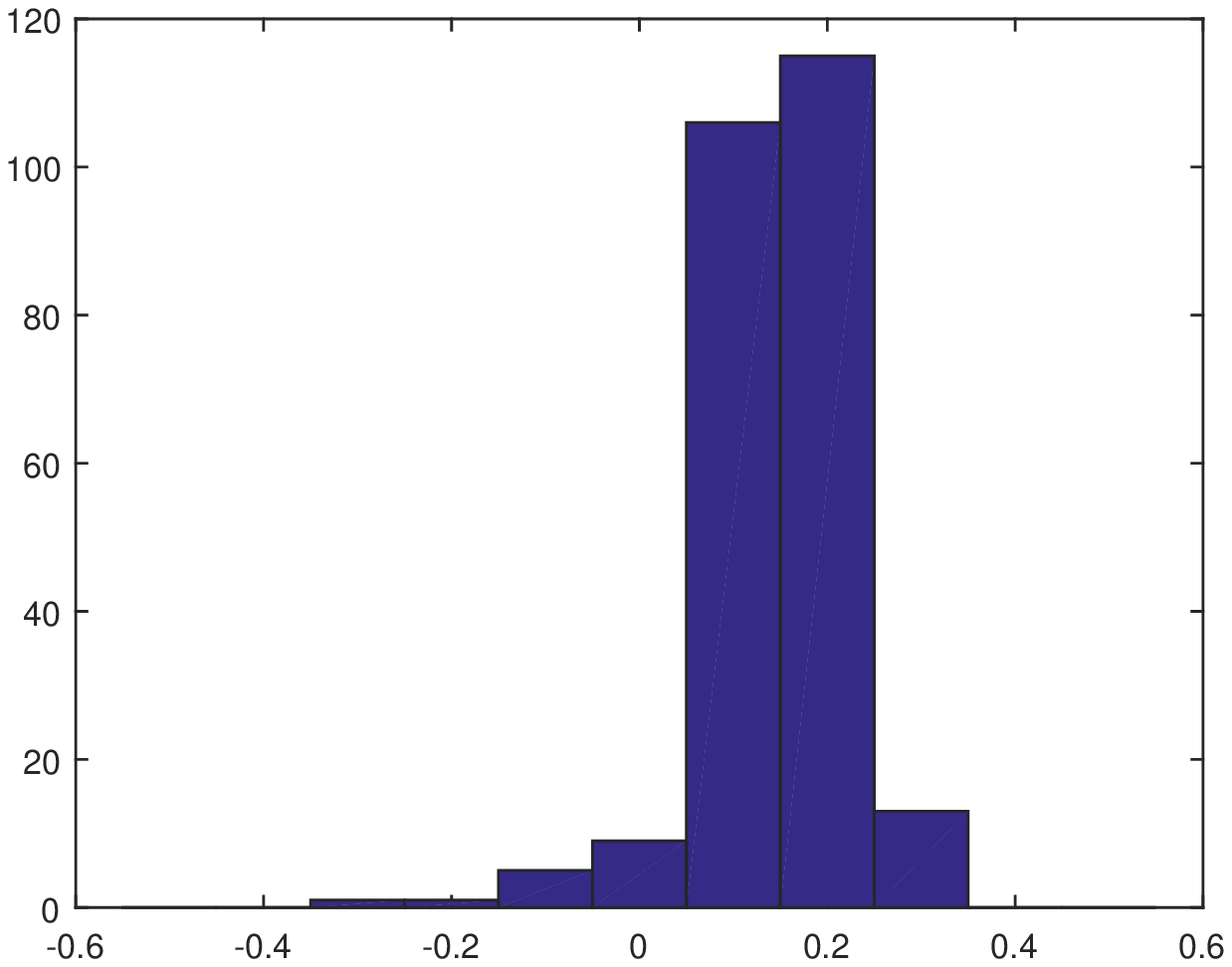}}\hfill
		\subfloat[$p=.05$ dimens. =300]{%
			\includegraphics[width=.30\textwidth]{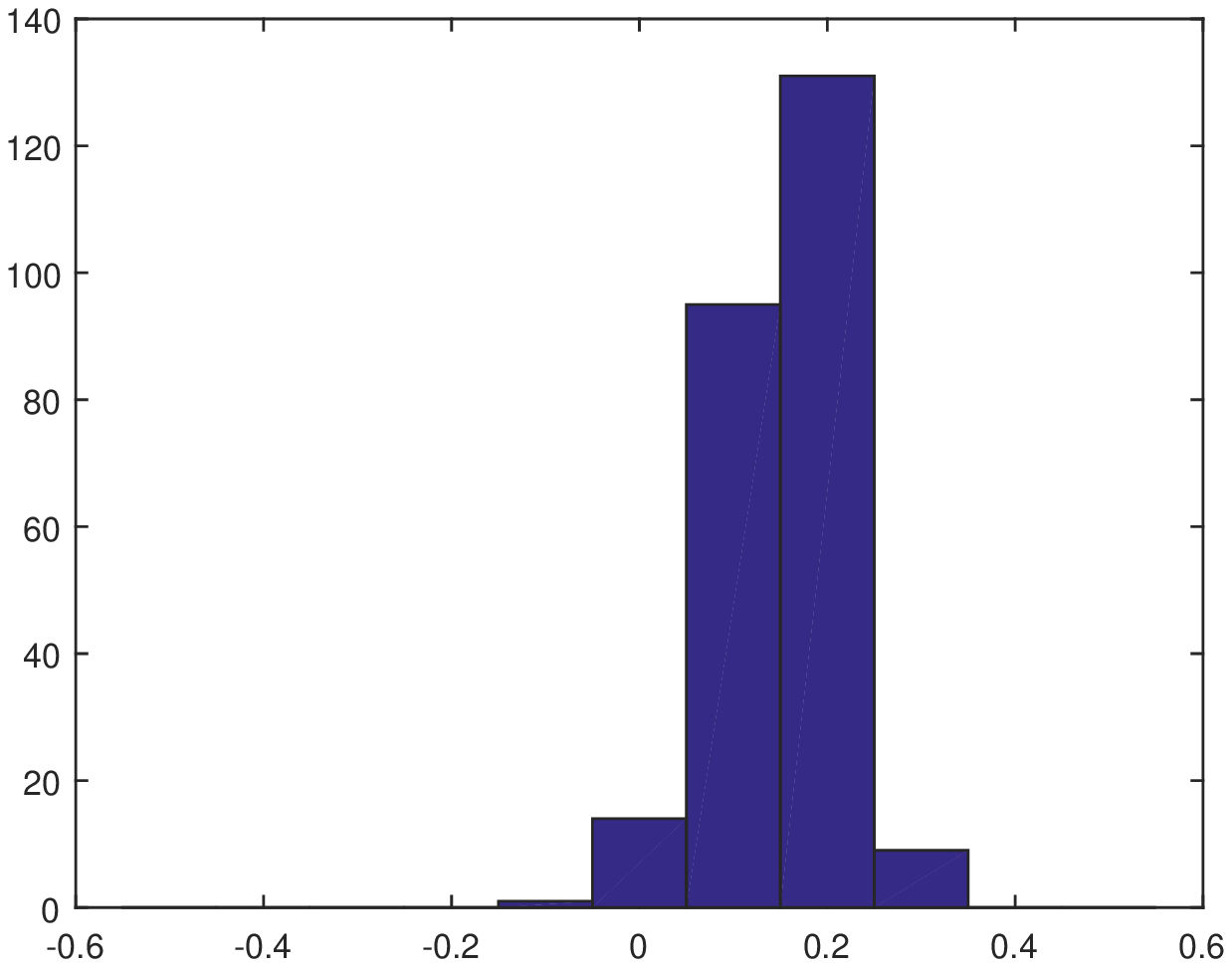}}\\
		\subfloat[$p=.05$ dimens. =350]{%
			\includegraphics[width=.30\textwidth]{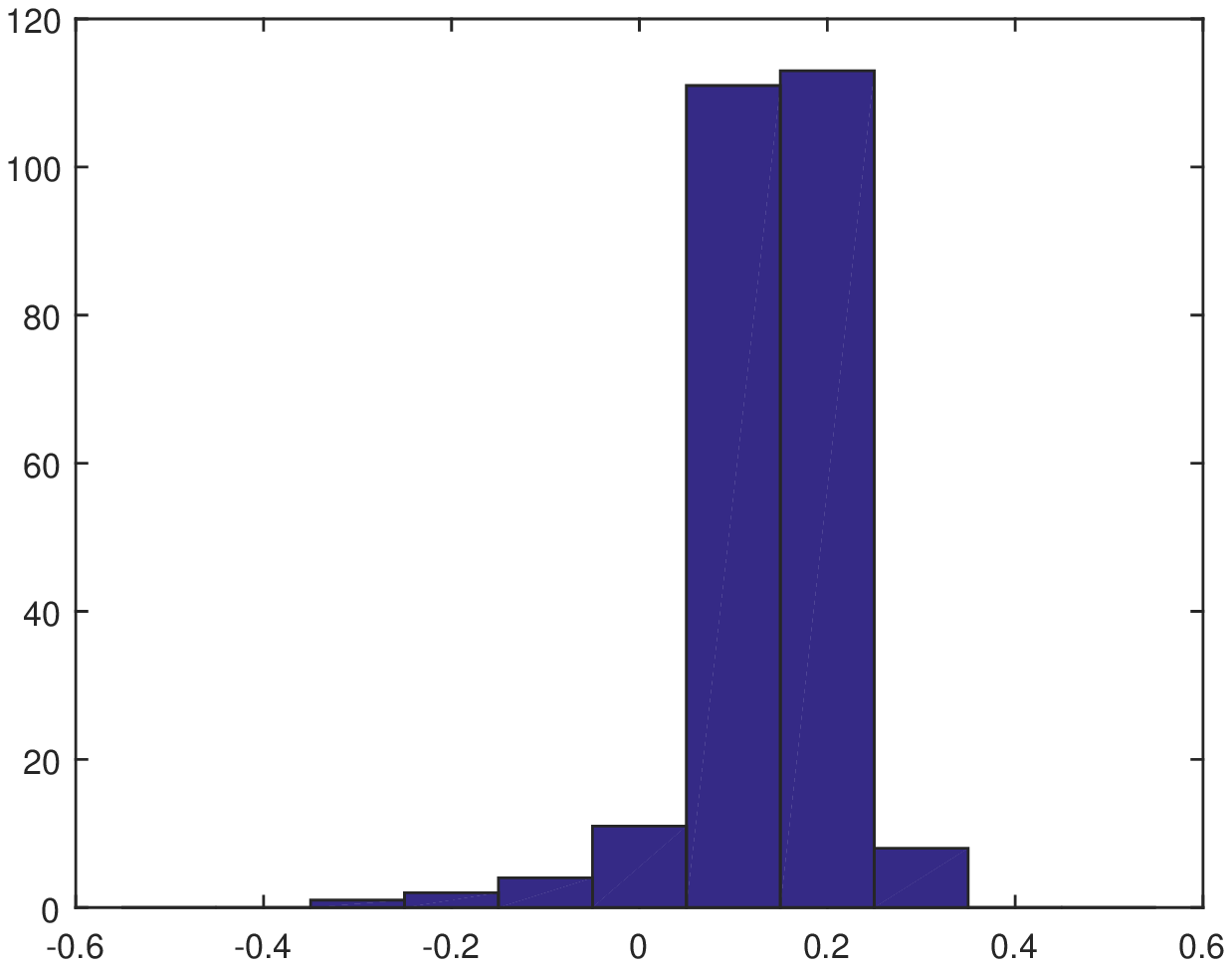}}\hfill
		\subfloat[$p=.05$ dimens. = 400]{%
			\includegraphics[width=.30\textwidth]{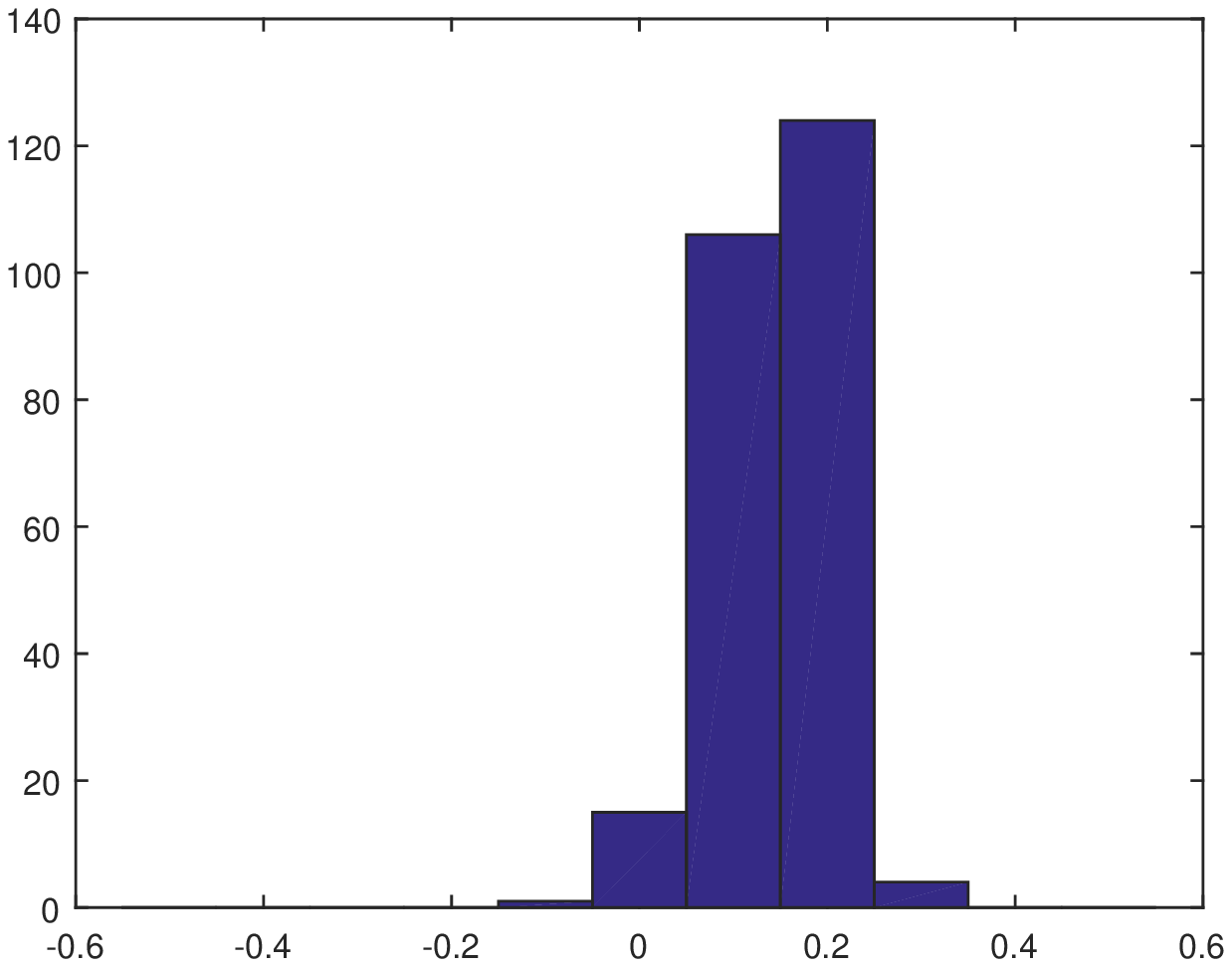}}\hfill
		\subfloat[$p=.05$ dimens. =450]{%
			\includegraphics[width=.30\textwidth]{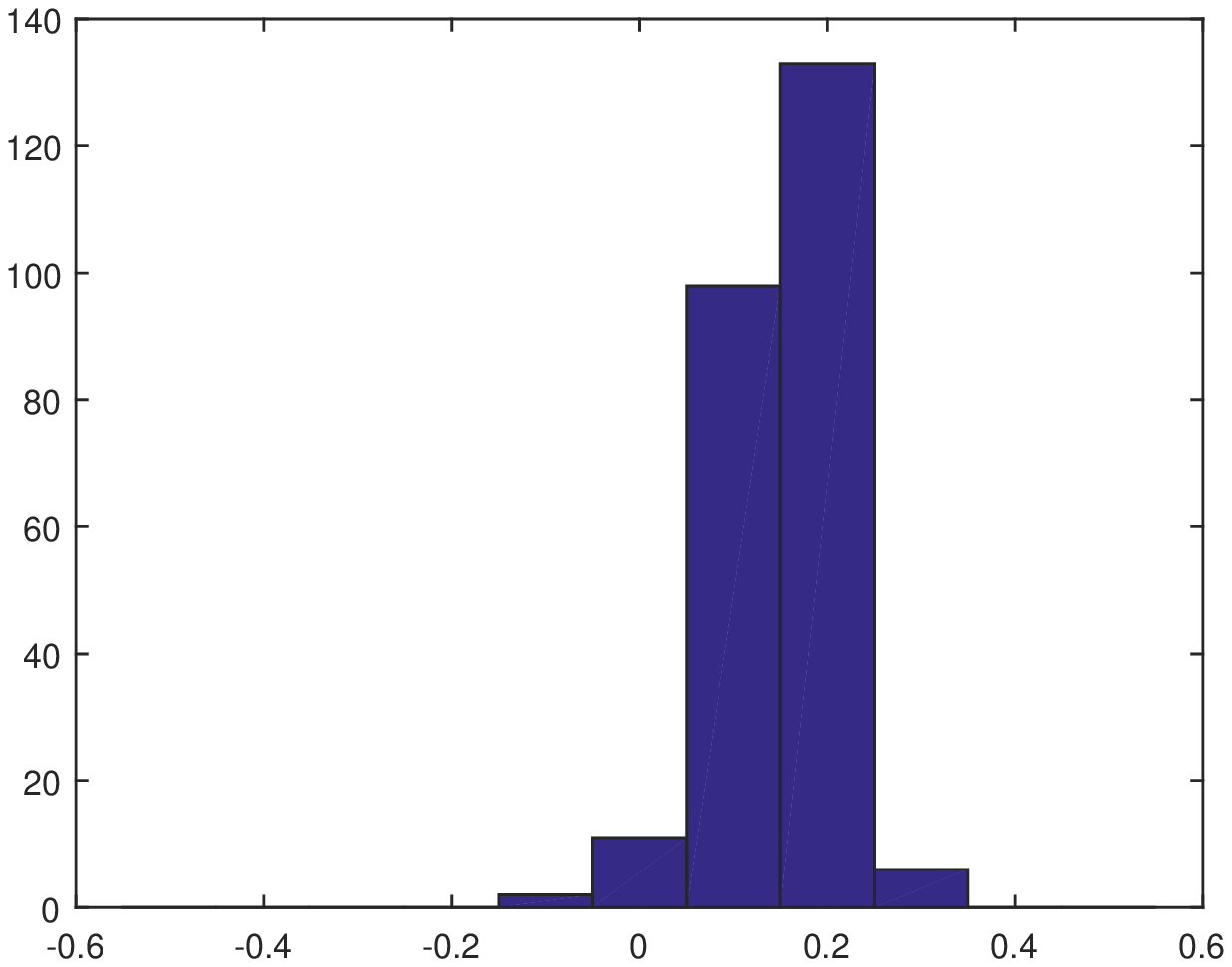}}
		
		\caption{Histogram of the relative difference of the $l^p$ norm of the affinity matrix between the original data and the embedded data}\label{histolp}
	\end{figure}

	\section{Conclusions}
	
	The goal of the present paper was to propose an analysis of Guedon and Vershynin's Semi-Definite Programming approach to the estimation of the 
	cluster matrix and show how this matrix can be used to produce an embedding 
	for preconditionning standard clustering procedures. The procedure is suitable for very high dimensional data because it is based on pairwise distances
	only. Moreover, increasing the dimension will improve the robustness of the procedure as soon as a Law of Large Numbers holds along the variables instead of the samples, forcing the affinity matrix to converge to a deterministic limit and thus making the estimator less sensitive to its low dimensional fluctuations. 
	
	Another feature of the method is that it may apply to a large number of mixtures type, even when the component's densities are not log-concave, as do a lot of embeddings as applied to data concentrated on complicated manifolds. 
	Further studies will be performed in this exciting direction. 
	
	Future work is also needed for proving that the proposed preconditioner is provably efficient when combined with various clustering techniques. One of the 
	main reason why this should be a difficult problem is that the approximation bound proved in the present paper, which is of the same order as for the 
	Stochastic Block Model, is hard to use for controlling the perturbation of the eigenspaces of $Z$. More precise use of the inherent randomness of the 
	perturbation, in the spirit of \cite{vu2011singular}, might bring the necessary ingredient in order to go a little 
	further in this direction.

	\appendix
	\section{Concentration inequalities}\label{app:logsob}
	The following inequality is a particuliar case of the Log-Sobolev concentration inequality, see Theorems 5.5 and 5.6. in \cite{Boucheron2013}.
	\begin{theorem}[Gaussian concentration inequality]\label{thm:lip}\ 
		Let $Y_1,\ldots,Y_n$ be independent Gaussian random vectors on $\mathbb{R}^p$ with mean $0$ and variance $I_p$. 
		Assume that $F:\mathbb{R}^{n\times p}\to \mathbb{R}$ is Lipschitz with constant $L$, i.e.
		\[
		|F(y')-F(y)|\leq L\|y'-y\|_2\quad  \mbox{for all}\  y,y'\in \mathbb{R}^{n\times p}.
		\]
		Then the random variable $F=F(Y_1,\ldots,Y_n)$ satisfies 
		\[
		\mathrm{E}[ \exp(\theta \left( F - \mathrm{E}F \right))] \leq \exp(L^2\theta^2/2) \quad \mbox{for all } \theta\in\mathbb{R}
		\] 
		and also
		\[
		\mathrm{P}(|F-\mathrm{E}F|>t)\leq 2\exp\left(-t^2/(8L^2) \right)\quad \mbox{for all } t>0.
		\]
	\end{theorem}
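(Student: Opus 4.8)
The plan is to derive both assertions from the Gaussian logarithmic Sobolev inequality via the classical Herbst argument. First I would observe that, since the $Y_i$ are independent with law $\mathcal N(0,I_p)$, the concatenation $Y=(Y_1,\ldots,Y_n)$ is a single standard Gaussian vector in $\mathbb{R}^{N}$ with $N=np$, so it suffices to treat an $L$-Lipschitz $F:\mathbb{R}^N\to\mathbb{R}$ evaluated at a standard Gaussian. Because an $L$-Lipschitz function is differentiable almost everywhere by Rademacher's theorem, with $\|\nabla F\|_2\le L$, while the differential argument below needs smoothness, I would first prove the bound for smooth $F$ satisfying $\|\nabla F\|_2\le L$ and then recover the general case by mollification: convolving $F$ with a narrow Gaussian kernel preserves the Lipschitz constant, and the limit passes through the moment generating function by dominated convergence.

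The heart of the argument is the Gaussian log-Sobolev inequality: for the standard Gaussian measure $\gamma$ on $\mathbb{R}^N$ and smooth $g$ one has $\mathrm{Ent}_\gamma(g^2)\le 2\,\mathrm{E}_\gamma[\|\nabla g\|_2^2]$. Assuming $\mathrm{E}F=0$ without loss of generality, I would apply this to $g=\exp(\theta F/2)$ and set $\psi(\theta)=\mathrm{E}[e^{\theta F}]$. The entropy then equals $\theta\psi'(\theta)-\psi(\theta)\log\psi(\theta)$, while, since $\nabla g=\tfrac{\theta}{2}e^{\theta F/2}\nabla F$ and $\|\nabla F\|_2\le L$, the gradient term is at most $\tfrac{\theta^2 L^2}{2}\psi(\theta)$. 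Writing $H(\theta)=\log\psi(\theta)$, the resulting inequality rearranges into $\frac{d}{d\theta}\!\left(\frac{H(\theta)}{\theta}\right)\le\frac{L^2}{2}$. Integrating from $0$, where $H(\theta)/\theta\to H'(0)=\mathrm{E}F=0$, yields $H(\theta)\le L^2\theta^2/2$, i.e. the sub-Gaussian bound $\mathrm{E}[\exp(\theta(F-\mathrm{E}F))]\le\exp(L^2\theta^2/2)$ for every $\theta\in\mathbb{R}$ (the case $\theta<0$ following by the same integration run in the opposite direction), which is the first assertion.

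The tail estimate then follows from the MGF bound by a Chernoff argument: for $t>0$ and $\theta>0$, Markov's inequality gives $\mathrm{P}(F-\mathrm{E}F>t)\le e^{-\theta t}\,\mathrm{E}[e^{\theta(F-\mathrm{E}F)}]\le e^{-\theta t+L^2\theta^2/2}$, and optimizing at $\theta=t/L^2$ gives $e^{-t^2/(2L^2)}$; applying the same to $-F$ and summing yields the two-sided bound $2e^{-t^2/(2L^2)}$, which is stronger than the stated $2\exp(-t^2/(8L^2))$, so the claim holds a fortiori. I expect the main obstacle to be the non-smoothness of $F$: the Herbst differential inequality is genuinely a statement about $\nabla F$, so the mollification step, the verification that $\|\nabla F\|_2\le L$ survives it, and the $\theta\to 0$ boundary behaviour of $H(\theta)/\theta$ are where the care is needed, whereas the log-Sobolev inequality itself may be invoked as a known ingredient.
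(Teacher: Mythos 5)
Your proof is correct and takes essentially the same route as the paper, which presents this result as a particular case of the log-Sobolev concentration inequality (citing Theorems 5.5 and 5.6 of Boucheron et al.) --- that is, exactly the Gaussian LSI plus Herbst argument you reconstruct, with the mollification step handling non-smoothness as in the standard treatment. Your Chernoff step in fact yields the sharper tail constant $2\exp\left(-t^2/(2L^2)\right)$, so the stated bound with $8L^2$ in the denominator follows a fortiori.
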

	
	The next theorem provides result for the expected maxima of (non necessarily independent) subgaussian random variables.
	
	\begin{theorem}\label{thm:Dudley}\
		Let $Z_1, \cdots, Z_N$ be real valued sub-Gaussian random variables with variance factor $\nu$, i.e. satisfying
		\[
		\mathrm{E}[\exp(\theta Z_i)] \leq \exp(\nu\theta^2/2) \quad \mbox{for all } \theta\in\mathbb{R}.
		\] 
		Then 
		\[
		\mathrm{E}\left[ \max_{i=1,\cdots, N} Z_i \right]\leq \sqrt{2 \nu \log N} .
		\]
	\end{theorem}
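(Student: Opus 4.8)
The plan is to use the classical exponential-moment (Chernoff) method combined with a union-type bound, which is the standard route to Dudley-style maximal inequalities for sub-Gaussian variables. The hypothesis controls the one-sided exponential moment $\mathrm{E}[\exp(\theta Z_i)]$ directly, so the whole argument will be driven by a free parameter $\theta > 0$ that I optimize at the very end; no centering or two-sided tail bound is required.

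First I would fix $\theta > 0$ and apply Jensen's inequality to the convex map $x \mapsto e^{\theta x}$, which gives
\[
\exp\left(\theta\, \mathrm{E}\left[\max_{i} Z_i\right]\right) \le \mathrm{E}\left[\exp\left(\theta \max_i Z_i\right)\right].
\]
The key observation is then that $\exp(\theta \max_i Z_i) = \max_i \exp(\theta Z_i) \le \sum_{i=1}^N \exp(\theta Z_i)$, so that after taking expectations and applying the sub-Gaussian hypothesis termwise I obtain
\[
\mathrm{E}\left[\exp\left(\theta \max_i Z_i\right)\right] \le \sum_{i=1}^N \mathrm{E}\left[\exp(\theta Z_i)\right] \le N \exp\left(\nu \theta^2/2\right).
\]
Note that this step uses no independence among the $Z_i$, which matches the hypotheses of the theorem.

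Taking logarithms of the combined bound and dividing by $\theta > 0$ yields the family of estimates
\[
\mathrm{E}\left[\max_i Z_i\right] \le \frac{\log N}{\theta} + \frac{\nu \theta}{2}, \quad \theta > 0.
\]
The final step is to optimize the right-hand side over $\theta$; the minimizer is $\theta = \sqrt{2 \log N / \nu}$, and substituting it back returns exactly the claimed bound $\sqrt{2 \nu \log N}$.

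I do not expect a genuine obstacle here, as the argument is elementary. The only points requiring care are that $\theta$ must be kept strictly positive so that the Jensen step and the subsequent division preserve the direction of the inequality, and that the passage from the maximum to the sum is the \emph{union-bound} step responsible for producing the $\log N$ factor. The quadratic-in-$\theta$ exponent coming from the sub-Gaussian moment bound is what makes the square-root dependence emerge after optimization, so the bookkeeping of constants in the optimization over $\theta$ is the only thing worth double-checking.
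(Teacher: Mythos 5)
Your proof is correct: the Jensen--union-bound--Chernoff argument with the optimization $\theta=\sqrt{2\log N/\nu}$ is exactly the canonical proof of this maximal inequality, and your bookkeeping (including the fact that no independence is needed and that $\theta>0$ is required for $\exp(\theta\max_i Z_i)=\max_i\exp(\theta Z_i)$) is accurate. The paper itself states this theorem in its appendix without proof, citing it as a standard result on expected maxima of sub-Gaussian variables (cf.\ Boucheron et al.), so your argument supplies precisely the missing standard derivation.
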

	
	\section{The Grothendieck inequality}
	In this paper, we use the following matrix version of Grothendieck inequality.
	We denote by $\mathcal{M}_G$ the set of matrices  $Z=XY^T$ with $X,Y\in\mathbb{R}^{n\times n}$ having all raws in the unit Euclidean ball, i.e.
	\[
	\forall i\in\{1,\ldots,n\},\quad \sum_{j=1}^n X_{ij}^2\leq 1\quad \mbox{and}\quad \sum_{j=1}^n Y_{ij}^2\leq 1
	\]  
	\begin{theorem}[Grothendieck inequality]\label{theoG}
		There exists an universal constant $K_G$ such that every matrix $B\in\mathbb{R}^{n\times n}$ satisfies
		\[
		\max_{Z\in\mathcal{M}_G} |\langle B,Z\rangle|\leq K_G \|B\|_{\infty\to 1}
		\]
		where the $\ell^{\infty}\rightarrow \ell^1$ norm of $B$ is defined by \eqref{eq:inftyto1}.
	\end{theorem}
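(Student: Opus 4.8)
The plan is to prove the inequality by Krivine's rounding argument, which yields the constant $K_G=\pi/(2\ln(1+\sqrt2))\approx 1.782$, consistent with the bound $K_G\le 1.8$ used above. Writing any $Z\in\mathcal{M}_G$ as $Z=XY^T$ with rows $x_1,\dots,x_n$ of $X$ and $y_1,\dots,y_n$ of $Y$ lying in the unit Euclidean ball, we have $\langle B,Z\rangle=\sum_{i,j}B_{ij}\langle x_i,y_j\rangle$, whereas by definition $\|B\|_{\infty\to 1}=\max_{u,v\in\{-1,1\}^n}\sum_{i,j}u_iv_jB_{ij}$. Since $\mathcal{M}_G$ is symmetric ($Z\in\mathcal{M}_G$ implies $-Z\in\mathcal{M}_G$) and the bilinear form is linear in each $x_i$ and $y_j$, the maximum over the ball is attained on the sphere; it therefore suffices to bound $\sum_{i,j}B_{ij}\langle x_i,y_j\rangle$ for unit vectors $x_i,y_j$ by $K_G\|B\|_{\infty\to 1}$.

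The first ingredient I would use is Grothendieck's identity: if $g$ is a standard Gaussian vector in the ambient Hilbert space, then for any two unit vectors $u,v$ one has $\mathrm{E}[\mathrm{sign}\langle g,u\rangle\,\mathrm{sign}\langle g,v\rangle]=\tfrac{2}{\pi}\arcsin\langle u,v\rangle$. If the inner products $\langle x_i,y_j\rangle$ were already of the form $\sin(\mathrm{const}\cdot\langle x_i,y_j\rangle)$, this identity would immediately convert the continuous bilinear form into an average of $\pm1$-valued bilinear forms, each bounded by $\|B\|_{\infty\to 1}$. The obstruction is precisely the nonlinear distortion introduced by $\arcsin$.

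The crux of the argument, and the step I expect to be the main obstacle, is Krivine's device for removing this distortion. Using the Taylor expansion $\sin(\beta t)=\sum_{k\ge 0}\tfrac{(-1)^k\beta^{2k+1}}{(2k+1)!}t^{2k+1}$ together with the identity $\langle u^{\otimes m},v^{\otimes m}\rangle=\langle u,v\rangle^m$, I would embed the $x_i$ and $y_j$ into the Hilbert space $\bigoplus_{k\ge0}(\mathbb{R}^n)^{\otimes(2k+1)}$ by setting $\tilde x_i=\bigoplus_k\sqrt{\beta^{2k+1}/(2k+1)!}\,x_i^{\otimes(2k+1)}$ and $\tilde y_j=\bigoplus_k(-1)^k\sqrt{\beta^{2k+1}/(2k+1)!}\,y_j^{\otimes(2k+1)}$, so that $\langle\tilde x_i,\tilde y_j\rangle=\sin(\beta\langle x_i,y_j\rangle)$. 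A direct computation gives $\|\tilde x_i\|^2=\|\tilde y_j\|^2=\sinh(\beta)$, so choosing $\beta=\mathrm{arcsinh}(1)=\ln(1+\sqrt2)$ makes all the $\tilde x_i,\tilde y_j$ unit vectors. Verifying convergence of these series, that the embedding genuinely produces unit vectors, and that $\beta<\pi/2$ (so that $\arcsin\sin(\beta\langle x_i,y_j\rangle)=\beta\langle x_i,y_j\rangle$, using $|\langle x_i,y_j\rangle|\le1$) is the delicate part.

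Finally I would assemble the pieces. Applying Grothendieck's identity to the unit vectors $\tilde x_i,\tilde y_j$ yields $\tfrac{2\beta}{\pi}\langle x_i,y_j\rangle=\mathrm{E}[\mathrm{sign}\langle g,\tilde x_i\rangle\,\mathrm{sign}\langle g,\tilde y_j\rangle]$. Multiplying by $B_{ij}$, summing, and exchanging sum and expectation gives $\tfrac{2\beta}{\pi}\sum_{i,j}B_{ij}\langle x_i,y_j\rangle=\mathrm{E}\big[\sum_{i,j}B_{ij}u_iv_j\big]$, where $u_i=\mathrm{sign}\langle g,\tilde x_i\rangle$ and $v_j=\mathrm{sign}\langle g,\tilde y_j\rangle$ take values in $\{-1,1\}$. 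For each realization of $g$ the inner sum is at most $\|B\|_{\infty\to 1}$, hence so is its expectation, and we conclude $\sum_{i,j}B_{ij}\langle x_i,y_j\rangle\le\tfrac{\pi}{2\beta}\|B\|_{\infty\to 1}$. Taking the maximum over $Z\in\mathcal{M}_G$ and restoring the absolute value via the symmetry noted above gives the claim with $K_G=\pi/(2\ln(1+\sqrt2))$.
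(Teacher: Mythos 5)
Your proof is correct, but there is nothing in the paper to compare it against: the paper states Theorem \ref{theoG} without proof, treating it as the classical matrix form of Grothendieck's inequality imported from the literature (it is the same formulation used by Gu\'edon and Vershynin \cite{guedon2015community}). What you have written is Krivine's argument, and it is complete in all essentials. The reduction to unit vectors is legitimate because $\langle B, XY^T\rangle = \sum_{i,j}B_{ij}\langle x_i,y_j\rangle$ is linear in each row $x_i$ and $y_j$ separately, so the maximum over the product of unit balls is attained with every row on the unit sphere. The tensor-power embedding does satisfy $\langle \tilde x_i,\tilde y_j\rangle=\sin\left(\beta\langle x_i,y_j\rangle\right)$ and $\|\tilde x_i\|^2=\|\tilde y_j\|^2=\sinh\beta$, so the choice $\sinh\beta=1$, i.e. $\beta=\ln(1+\sqrt 2)$, normalizes the embedded vectors; and since $\beta<\pi/2$ and $|\langle x_i,y_j\rangle|\le 1$, the composition of $\arcsin$ with $\sin$ is the identity on the relevant range, which is exactly the point of Krivine's device. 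For every realization of $g$ the sign vectors lie in $\{-1,1\}^n$, so the inner sum is bounded by $\|B\|_{\infty\to 1}$ as defined in \eqref{eq:inftyto1}, and averaging preserves the bound. Two details worth making explicit in a write-up: (i) Grothendieck's identity should be applied with $g$ a standard Gaussian on the finite-dimensional span of the $2n$ vectors $\tilde x_i,\tilde y_j$, so no infinite-dimensional Gaussian is needed, and $\langle g,\tilde x_i\rangle\neq 0$ almost surely, so the signs are well defined; (ii) the symmetry $Z\in\mathcal{M}_G\Rightarrow -Z\in\mathcal{M}_G$, which you note, is what upgrades the one-sided bound to the absolute value in the statement. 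Your constant $K_G=\pi/\left(2\ln(1+\sqrt 2)\right)\approx 1.782$ is precisely what justifies the numerical bound $K_G\le 1.8$ invoked in Theorem \ref{theo1}, so your argument supplies a justification the paper leaves entirely to the references.
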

	It is also useful to note the following properties of $\mathcal{M}_G$, see Lemma 3.3 in \cite{guedon2015community}.
	\begin{lemma}\label{lemG}
		Every matrix $Z\in\mathbb{R}^{n\times n}$ such that $Z\succeq 0$ and $\mathrm{diag}(Z)\leq  1_n$ satisfies $Z\in\mathcal{M}_G$.
	\end{lemma}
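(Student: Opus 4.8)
The plan is to exhibit an explicit factorization $Z = XY^T$ with $X,Y\in\mathbb{R}^{n\times n}$ having all rows in the unit Euclidean ball, since this is precisely the membership condition defining $\mathcal{M}_G$. The key observation is that the positive semidefiniteness of $Z$ furnishes a Gram representation whose generating vectors have squared norms equal to the diagonal entries of $Z$; the hypothesis $\mathrm{diag}(Z)\leq 1_n$ then bounds these norms by one.

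First I would use $Z\succeq 0$ to write $Z = UU^T$ for some matrix $U$; one may take $U=Z^{1/2}$, the symmetric positive semidefinite square root, or any Cholesky-type factor. A priori $U$ has $r=\mathrm{rank}(Z)\leq n$ columns, but padding with zero columns yields $U\in\mathbb{R}^{n\times n}$ without altering either the product $UU^T$ or the Euclidean norms of the rows. Writing $u_i$ for the $i$-th row of $U$, the entries of $Z$ then read $Z_{ij}=\langle u_i,u_j\rangle$.

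The decisive step is to read the row norms off the diagonal: for every $i$ one has $\|u_i\|_2^2=\langle u_i,u_i\rangle = Z_{ii}\leq 1$, where the inequality is exactly the assumption $\mathrm{diag}(Z)\leq 1_n$. Hence each row of $U$ lies in the unit Euclidean ball. Taking $X=Y=U$ gives $Z=UU^T=XY^T$ with $X,Y\in\mathbb{R}^{n\times n}$ satisfying the defining constraints of $\mathcal{M}_G$, so $Z\in\mathcal{M}_G$ as claimed.

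I do not anticipate a genuine obstacle here: the statement amounts to the elementary fact that a Gram matrix whose diagonal is bounded by one admits generating vectors contained in the unit ball. The only point requiring minor care is the dimension bookkeeping, namely ensuring that the factor can be taken to be a square $n\times n$ matrix by zero-padding, which leaves the argument unaffected.
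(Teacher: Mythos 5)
Your proof is correct and is essentially the same Gram-factorization argument as in the paper's cited source (Lemma 3.3 of Gu\'edon and Vershynin, which the paper invokes without reproducing the proof): write $Z=UU^T$ with $U=Z^{1/2}\in\mathbb{R}^{n\times n}$, note $\|u_i\|_2^2=Z_{ii}\leq 1$, and take $X=Y=U$. The zero-padding remark is not even needed when one takes the symmetric square root, since it is already $n\times n$.
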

	
	\bibliographystyle{amsplain}
	\bibliography{BiblioClust}{}

\providecommand{\bysame}{\leavevmode\hbox to3em{\hrulefill}\thinspace}
\providecommand{\MR}{\relax\ifhmode\unskip\space\fi MR }
\providecommand{\MRhref}[2]{%
  \href{http://www.ams.org/mathscinet-getitem?mr=#1}{#2}
}
\providecommand{\href}[2]{#2}
\begin{thebibliography}{10}

\bibitem{abbe2014exact}
Emmanuel Abbe, Afonso~S Bandeira, and Georgina Hall, \emph{Exact recovery in
  the stochastic block model}, arXiv preprint arXiv:1405.3267 (2014).

\bibitem{akaike1974new}
Hirotugu Akaike, \emph{A new look at the statistical model identification},
  Automatic Control, IEEE Transactions on \textbf{19} (1974), no.~6, 716--723.

\bibitem{bandeira2015ten}
Afonso~S Bandeira, \emph{Ten lectures and forty-two open problems in the
  mathematics of data science},  (2015).

\bibitem{belkin2001laplacian}
Mikhail Belkin and Partha Niyogi, \emph{Laplacian eigenmaps and spectral
  techniques for embedding and clustering.}, NIPS, vol.~14, 2001, pp.~585--591.

\bibitem{biernacki2000assessing}
Christophe Biernacki, Gilles Celeux, and G{\'e}rard Govaert, \emph{Assessing a
  mixture model for clustering with the integrated completed likelihood},
  Pattern Analysis and Machine Intelligence, IEEE Transactions on \textbf{22}
  (2000), no.~7, 719--725.

\bibitem{biernacki2003degeneracy}
Christophe Biernacki and St{\'e}phane Chr{\'e}tien, \emph{Degeneracy in the
  maximum likelihood estimation of univariate gaussian mixtures with em},
  Statistics \& probability letters \textbf{61} (2003), no.~4, 373--382.

\bibitem{Boucheron2013}
St{\'e}phane Boucheron, G{\'a}bor Lugosi, and Pascal Massart,
  \emph{Concentration inequalities}, Oxford University Press, Oxford, 2013, A
  nonasymptotic theory of independence, With a foreword by Michel Ledoux.
  \MR{3185193}

\bibitem{boyd2004convex}
Stephen Boyd and Lieven Vandenberghe, \emph{Convex optimization}, Cambridge
  university press, 2004.

\bibitem{candes2011robust}
Emmanuel~J Cand{\`e}s, Xiaodong Li, Yi~Ma, and John Wright, \emph{Robust
  principal component analysis?}, Journal of the ACM (JACM) \textbf{58} (2011),
  no.~3, 11.

\bibitem{cannings2015random}
Timothy~I Cannings and Richard~J Samworth, \emph{Random projection ensemble
  classification}, arXiv preprint arXiv:1504.04595 (2015).

\bibitem{Celeux92}
Gilles Celeux and G{\'e}rard Govaert, \emph{A classification {EM} algorithm for
  clustering and two stochastic versions}, Comput. Statist. Data Anal.
  \textbf{14} (1992), no.~3, 315--332. \MR{1192205 (93k:62126)}

\bibitem{chen2015convex}
Gary~K Chen, Eric~C Chi, John Michael~O Ranola, and Kenneth Lange, \emph{Convex
  clustering: An attractive alternative to hierarchical clustering}, PLoS
  Comput Biol \textbf{11} (2015), no.~5, e1004228.

\bibitem{Dempster1977}
A.~P. Dempster, N.~M. Laird, and D.~B. Rubin, \emph{Maximum likelihood from
  incomplete data via the {EM} algorithm}, J. Roy. Statist. Soc. Ser. B
  \textbf{39} (1977), no.~1, 1--38, With discussion. \MR{0501537 (58 \#18858)}

\bibitem{guedon2015community}
Olivier Gu{\'e}don and Roman Vershynin, \emph{Community detection in sparse
  networks via grothendieck's inequality}, Probability Theory and Related
  Fields (2015), 1--25.

\bibitem{heimlicher2012community}
Simon Heimlicher, Marc Lelarge, and Laurent Massouli{\'e}, \emph{Community
  detection in the labelled stochastic block model}, arXiv preprint
  arXiv:1209.2910 (2012).

\bibitem{helmberg2000spectral}
Christoph Helmberg and Franz Rendl, \emph{A spectral bundle method for
  semidefinite programming}, SIAM Journal on Optimization \textbf{10} (2000),
  no.~3, 673--696.

\bibitem{hocking2011clusterpath}
Toby~Dylan Hocking, Armand Joulin, Francis Bach, and Jean-Philippe Vert,
  \emph{Clusterpath an algorithm for clustering using convex fusion penalties},
  28th international conference on machine learning, 2011, p.~1.

\bibitem{jain2010data}
Anil~K Jain, \emph{Data clustering: 50 years beyond k-means}, Pattern
  recognition letters \textbf{31} (2010), no.~8, 651--666.

\bibitem{johnson1984extensions}
William~B Johnson and Joram Lindenstrauss, \emph{Extensions of lipschitz
  mappings into a hilbert space}, Contemporary mathematics \textbf{26} (1984),
  no.~189-206, 1.

\bibitem{jolliffe2002principal}
Ian Jolliffe, \emph{Principal component analysis}, Wiley Online Library, 2002.

\bibitem{lee2014multiway}
James~R Lee, Shayan~Oveis Gharan, and Luca Trevisan, \emph{Multiway spectral
  partitioning and higher-order cheeger inequalities}, Journal of the ACM
  (JACM) \textbf{61} (2014), no.~6, 37.

\bibitem{linial1995geometry}
Nathan Linial, Eran London, and Yuri Rabinovich, \emph{The geometry of graphs
  and some of its algorithmic applications}, Combinatorica \textbf{15} (1995),
  no.~2, 215--245.

\bibitem{mclachlan2004finite}
Geoffrey McLachlan and David Peel, \emph{Finite mixture models}, John Wiley \&
  Sons, 2004.

\bibitem{mossel2012stochastic}
Elchanan Mossel, Joe Neeman, and Allan Sly, \emph{Stochastic block models and
  reconstruction}, arXiv preprint arXiv:1202.1499 (2012).

\bibitem{overton2009hanso}
M~Overton, \emph{Hanso: a hybrid algorithm for nonsmooth optimization},
  Available from cs. nyu. edu/overton/software/hanso (2009).

\bibitem{radchenko2014consistent}
Peter Radchenko and Gourab Mukherjee, \emph{Consistent clustering using an
  $\ell\_1 $ fusion penalty}, arXiv preprint arXiv:1412.0753 (2014).

\bibitem{schwarz1978estimating}
Gideon Schwarz et~al., \emph{Estimating the dimension of a model}, The annals
  of statistics \textbf{6} (1978), no.~2, 461--464.

\bibitem{tan2015statistical}
Kean~Ming Tan, Daniela Witten, et~al., \emph{Statistical properties of convex
  clustering}, Electronic Journal of Statistics \textbf{9} (2015), no.~2,
  2324--2347.

\bibitem{Tropp:SODA09}
Joel~A Tropp, \emph{Column subset selection, matrix factorization, and
  eigenvalue optimization}, Proceedings of the Twentieth Annual ACM-SIAM
  Symposium on Discrete Algorithms, Society for Industrial and Applied
  Mathematics, 2009, pp.~978--986.

\bibitem{von2007tutorial}
Ulrike Von~Luxburg, \emph{A tutorial on spectral clustering}, Statistics and
  computing \textbf{17} (2007), no.~4, 395--416.

\bibitem{vu2011singular}
Van Vu, \emph{Singular vectors under random perturbation}, Random Structures \&
  Algorithms \textbf{39} (2011), no.~4, 526--538.

\bibitem{wang2016sparse}
Binhuan Wang, Yilong Zhang, Wei Sun, and Yixin Fang, \emph{Sparse convex
  clustering}, arXiv preprint arXiv:1601.04586 (2016).

\bibitem{weinberger2006unsupervised}
Kilian~Q Weinberger and Lawrence~K Saul, \emph{Unsupervised learning of image
  manifolds by semidefinite programming}, International Journal of Computer
  Vision \textbf{70} (2006), no.~1, 77--90.

\end{thebibliography}


\end{document}